\newtheorem{example}{Example}
\newtheorem{definition}{Definition}
\newtheorem{corollary}{Corollary}
\newtheorem{remark}{Remark}
\newtheorem{assumption}{Important Assumption}
\newcommand {\tup}[1]      {{\langle #1 \rangle}}
\definecolor{dgreen}{rgb}{0.,0.6,0.}
 \newcommand{\jss}[1]{\textcolor{blue!20!black}{#1}}
\newcommand{\IC}{\mathit{IC}}
\newcommand{\ICc}{\mathcal{IC}}
\DeclareMathOperator{\lfp}{lfp}
	\newcommand{\Lc}{\mathcal{L}}
\newcommand{\indep}{\perp \!\!\! \perp}
\newcommand{\lpnot}{\mathrm{not}\:}
\renewcommand{\sim}{\lpnot }
\renewcommand{\lnot}{\lpnot}
\providecommand{\bigsqcap}{%
  \mathop{%
    \mathpalette\@updown\bigsqcup
  }%
}
\newcommand*{\@updown}[2]{%
  \rotatebox[origin=c]{180}{$\m@th#1#2$}%
}
\DeclareRobustCommand\sampleline[1]{%
  \tikz\draw[#1, thick,line width=1] (0,0) (0,\the\dimexpr\fontdimen22\textfont2\relax)
  -- (2em,\the\dimexpr\fontdimen22\textfont2\relax);%
  \newcommand{\proofShifty}{\appendixproof}

}
\begin{document}

\title{An Algebraic Notion of Conditional Independence, and Its Application to Knowledge Representation (full version)}

\author{
Jesse Heyninck\\
{\small 
Open Universiteit, the Netherlands}\\
{\small University of Cape Town, South-Africa}
}
%

\maketitle

\setcounter{secnumdepth}{2}

\begin{abstract}
Conditional independence is a crucial concept supporting adequate modelling and efficient reasoning in probabilistics. In knowledge representation, the idea of conditional independence has also been introduced for specific formalisms, such as propositional logic and belief revision. In this paper, the notion of conditional independence is studied in the algebraic framework of approximation fixpoint theory. This gives a language-independent account of conditional independence that can be straightforwardly applied to any logic with fixpoint semantics. It is shown how this notion allows to reduce global reasoning to parallel instances of local reasoning, leading to fixed-parameter tractability results. Furthermore, relations to existing notions of conditional independence are discussed and the framework is applied to normal logic programming. 
\end{abstract}

\section{Introduction}

Over the last decades, conditional independence was shown to be a crucial concept supporting adequate modelling and  efficient reasoning in probabilistics \citep{pearl1989conditional}.  It is the fundamental concept underlying  network-based reasoning in probabilistics, which has been arguably one of the most important factors in the rise of contemporary artificial intelligence. Even though many reasoning tasks on the basis of probabilistic information 
have a high worst-case complexity due to their semantic nature, network-based models allow an efficient computation of many concrete instances of these reasoning tasks thanks to local reasoning techniques. Therefore, conditional independence has also been investigated for several approaches in knowledge representation, such as propositional logic \citep{darwiche1997logical,lang2002conditional}, belief revision \citep{kern2022conditional,lynn2022using} and conditional logics \citep{AAAI/HeyninckKM23}. For many other central formalisms in KR, such a study has not yet been undertaken.  

Due to the wide variety of formalisms studied in knowledge representation, it is often beneficial yet challenging to study a concept in a language-independent manner. Indeed, such language-independent studies avoid having to define and investigate the same concept for different formalisms. In recent years, a promising framework for such language-independent investigations is the algebraic \emph{approximation fixpoint theory} (AFT) \cite{denecker2003uniform}, which conceives of KR-formalisms as operators over a lattice (such as the immediate consequence operator from logic programming). AFT can represent a wide variety of KR-formalisms; such as propositional 
logic programming~\cite{denecker2012approximation,heyninck2024operatorbasedsemanticschoiceprograms,DBLP:journals/tplp/HeyninckB23}, default logic~\cite{denecker2003uniform}, autoepistemic logic~\cite{denecker2003uniform}, abstract argumentation and abstract 
dialectical frameworks~\cite{strass2013approximating}, hybrid MKNF~\cite{liu2022alternating} and SHACL~\cite{DBLP:journals/corr/abs-2109-08285},
 and was shown to be fruitful for language-independent studies of concepts such as splitting \citep{vennekens2006splitting}, 
or non-determinism \citep{DBLP:journals/corr/abs-2211-17262}.

In this paper, we give an algebraic, operator-based account of conditional independence. 
In more detail, the paper makes the following contributions:

\noindent (1) Definition of conditional independence in an operator-based, algebraic framework, providing a notion applicabble to any formalism that admits an operator-based characterization, such as the ones mentioned above.

\noindent (2) Proof of crucial properties about conditional independence, including that search for fixpoints of an (approximation) operator over conditionally indepedent modules.

\noindent (3) Fixed-parameter tractability results based on conditional independence.

\noindent (4) A proof-of-concept application to normal logic programs under various semantics.

\noindent (5) Establish connections with existing work on conditional independence in KR.

\noindent{\bf Outline of the Paper}:
The necessary preliminaries on logic programming (Section \ref{sec:LP}), lattices (Section \ref{sec:lattices}) and approximation fixpoint theory (Section \ref{sec:AFT}) are introduced in Section \ref{sec:back:prelim}. The concept of conditional independence of sub-lattices w.r.t.\ an operator is introduced and studied in Section \ref{sec:conditional:independence}, and applied to approximation operators in Section \ref{sec:conditional:independence:aft}. Fixed-parameter complexity results are shown in Section \ref{sec:fpt}.
This theory is applied to the semantics of normal logic programs  in Section \ref{sec:conditional:independence:lp}. Related work is discussed in Section \ref{sec:rel:work}, after which the paper is concluded (Section \ref{sec:conclusion}).

\section{Background and Preliminaries}
\label{sec:back:prelim}

In this section, we recall the necessary basics of logic programming,  abstract algebra, and AFT. 
We refer to the literature for more detailed introductions on logic programming semantics based on four-valued operators \citep{fitting1991bilattices,denecker2012approximation}, order theory \cite{davey2002introduction} and AFT \cite{phd/Bogaerts15}.

\subsection{Logic Programming}
\label{sec:LP}
We assume a set of atoms ${\cal A}$ and a language ${\cal L}$ built up from atoms, conjunction $\land$ and negation ${\rm not}$.
A (propositional normal) logic program ${\cal P}$ (a nlp, for short) is a finite set of rules of the form $p~\leftarrow~p_1\land \ldots \land p_n\land \lnot q_1\land \dots\land \lnot q_m$, 
where $p, p_1,\ldots,p_n,q_1,\ldots,q_m$ are atoms
that may include the propositional 
constants $\top$ (representing truth) and $\bot$ (falsity). 
A rule is {\em positive\/} if there are no negations in the rule's bodies, and a program is positive if so are all its rules. We use the  four-valued bilattice consisting of truth values ${\sf U}$, ${\sf F}$, ${\sf T}$ and ${\sf C}$ ordered by $\leq_i$ by: ${\sf U}\leq_i {\sf F} \leq_i {\sf C}$ and ${\sf U}\leq_i {\sf T} \leq_i {\sf C}$, and ordered by $\leq_t$ by: ${\sf F}\leq_t {\sf C}\leq_t {\sf T}$ and ${\sf F}\leq_t {\sf U}\leq_t {\sf T}$.

We also assume a $\leq_t$-involution $-$ on $\leq_t$ (i.e, $-{\sf F}={\sf T}$, $-{\sf T}={\sf F}$, $-{\sf U}={\sf U}$ and 
$-{\sf C}={\sf C}$). A {\em four-valued interpretation} of a program ${\cal P}$ is a pair $(x,y)$, where 
$x \subseteq {\cal A}_{\cal P}$ is the set of the atoms that are assigned a value in $\{{\sf T},{\sf C}\}$ and $y \subseteq {\cal A}_{\cal P}$ is the set of atoms assigned 
a value in $\{{\sf T},{\sf U}\}$. Somewhat skipping ahead to section \ref{sec:lattices}, the intuition here is that $x$ 
($y$) is a lower 
(upper) approximation of the true atoms.
Interpretations are compared by  the \emph{information order} $\leq_i$, where $(x,y)\leq_i (w,z)$ iff $x\subseteq w$ and $z\subseteq y$
(sometimes called ``precision'' order), and by the \emph{truth order} $\leq_t$, where $(x,y)\leq_t (w,z)$ iff $x\subseteq w$ and $y\subseteq z$
(increased `positive' evaluations). Truth assignments to complex formulas are then recursively defined as follows:
\begin{itemize}
\item $(x,y)(\jss{p})=
\begin{cases}
      {\sf T} & \text{ if } \jss{p} \in x \text{ and } \jss{p} \in y,  \\
      {\sf U} & \text{ if } \jss{p} \not\in x \text{ and }\jss{p} \in y,  \\
      {\sf F} & \text{ if } \jss{p} \not\in x \text{ and } \jss{p} \not\in y,  \\
      {\sf C} & \text{ if } \jss{p} \in x \text{ and } \jss{p} \not\in y. 
\end{cases}$ \smallskip
\item $(x,y)(\lnot \phi)=- (x,y)(\phi)$,  
\item $(x,y)(\psi \land \phi)=\bigsqcup_{\leq_t}\{(x,y)(\phi),(x,y)(\psi)\}$, 
\end{itemize}

A four-valued interpretation of the form $(x,x)$ may be associated with a {\em two-valued\/} (or {\em total\/}) interpretation $x$, 
in which for an atom $p$, $x(p) = {\sf T}$ if $p \in x$ and $x(p) = {\sf F}$ otherwise. We say that $(x,y)$ is a {\em three-valued\/} 
(or {\em consistent\/}) interpretation, if $x \subseteq y$. Note that in consistent interpretations there are no ${\sf C}$-assignments.

We now consider  the two- and four-valued immediate consequence operators for nlps,  defined as follows:

\begin{definition}
\label{def:operator:disj:lp}
Given a nlp ${\cal P}$ and a two-valued interpretation $x$, we define:
\[ IC_{\cal P}(x)=\{p\in {\cal A}_{\cal P}\mid p\leftarrow \psi\in {\cal P}, (x,x)(\psi)={\sf T} \}.\] 

For a four-valued interpretation $(x,y)$, we define:
\begin{align*}
 \ICc^l_{\cal P}(x,y)&=\{p\mid p\leftarrow \psi\in {\cal P}, (x,y)(\psi)\in\{{\sf T},{\sf C}\} \},\\
  \ICc^u_{\cal P}(x,y)&=\{p\mid p\leftarrow \psi\in {\cal P}, (x,y)(\psi)\in\{{\sf U},{\sf T}\} \},\\
\ICc_{\cal P}(x,y)&=( \ICc^l_{\cal P}(x,y),  \ICc^u_{\cal P}(x,y)).
\end{align*} 
 
 \end{definition}
 
Again somewhat skipping ahead to Section \ref{sec:lattices}, denoting by $2^{\cal A}$ the powerset of ${\cal A}$, $\IC_{\cal P}$ is an operator on the lattice $\tup{2^{\cal A},\subseteq}$ that derives all heads of rules with true bodies. $\ICc_{\cal P}$, on the other hand, is a generalisation of this operator to $(2^{\cal A})^2$.

\subsection{Lattices and sub-lattices}
\label{sec:lattices}

We recall some necessary preliminaries on set theory and (sub-)lattices.
A lattice is a partially ordered set $L=\langle {\cal L},\leq\rangle$ where every two elements $x,y\in{\cal L}$ have a least upper $x\sqcup y$ and a greatest lower bound $x\sqcap y$. We will often also refer to a lattice $\langle {\cal L},\leq\rangle$ by its set of elements ${\cal L}$. A lattice is complete if every set $X\subseteq {\cal L}$ has a least upper (denoted $\bigsqcup X$) and a greatest lower bound (denoted $\bigsqcap X$). $\tup{2^{\cal A},\subseteq}$  is an example of a complete lattice. $x$ is a fixpoint of $O$ if $x=O(x)$, and the least fixpoint of $O$ is denoted $\lfp(O)$.

We now provide background on how to (de)compose lattices into sub-lattices, following \citet{vennekens2006splitting}.
Let $I$ be a set, which we call the \emph{index set}, and for each $i\in I$, let $\Lc_i$ be a set. The product set $\bigotimes_{i\in I}\Lc_i$ is the following set of functions:
\[
\bigotimes_{i\in I} \Lc_i= \{f\mid f:I\rightarrow \bigcup_{i\in I} \Lc_i\mbox{ s.t.\ } \forall i\in I: \: f(i)\in \Lc_i\}.\]
The product set $\bigotimes_{i\in I} \Lc_i$ contains all ways of selecting one element of every set $\Lc_i$. 
E.g.\ for the sets $\Lc_1=\{\emptyset,\{p\}\}$ and $\Lc_2=\{\emptyset,\{q\}\}$, $\bigotimes_{i\in \{1,2\}}\Lc_i$ contains, among others, $f$ and $f'$ with $f(1)=f(2)=\emptyset$ and $f'(1)=\emptyset$ and $f'(2)=\{q\}$.
For finite $I=\{1,\ldots,n\}$, the product $\bigotimes_{i\in I}\Lc_i$ is (isomorphic to) the cartesian product $\Lc_1\times\ldots \times \Lc_n$. We will also denote $\bigotimes_{i\in \{1,2\}}\Lc_i$ by $\Lc_1\otimes \Lc_2$ to avoid clutter.

If each $\Lc_j$ is partially ordered by some $\leq_j$, this induces the product order $\leq_\otimes$ on $\bigotimes_{j\in I}\Lc_j$: for all $x,y\in \bigotimes_{j\in I}\Lc_j$, $x\leq_\otimes y$ iff for all $j\in I$, $x(j)\leq_j y(j)$. We will sometimes denote the product order over $\bigotimes_{j\in I}\Lc_j$ by $\leq_\otimes^I$. It can be easily shown that if all $\langle \Lc_j,\leq_j\rangle$ are (complete) lattices, then $\langle \bigotimes_{j\in I}\Lc_j,\leq_\otimes\rangle$ is also a (complete) lattice, called  the \emph{product lattice} of the lattices $\Lc_j$.

We denote, for $x\in \bigotimes_{i\in I} \Lc_i$ and $i\in I$, $x_{|i}\in \Lc_i$ as $f(i)$, and for $J\subseteq I$ we denote $x_{|J}$ by $\bigotimes_{i\in J}x_i$. For example, using $\Lc_1$ and $\Lc_2$ as in the example above, $\emptyset\times \{q\}_{|1}=\emptyset$.
Likewise, we denote by $x_i\otimes x_j$ the element $x\in \Lc_i\otimes \Lc_j$ s.t.\ $x_{|k}=x_k$ for $k=i,j$, and we lift this to sets as usual.

\subsection{Approximation Fixpoint Theory}
\label{sec:AFT}
We recall basic notions from approximation fixpoint theory (AFT) by  \cite{denecker2000approximations}.  

 Given a lattice $L = \tup{{\cal L},\leq}$, we let $L^2 =\tup{{\cal L}^2,\leq_i,\leq_t}$ be the structure (called {\em bilattice\/}), in which
${\cal L}^2 = {\cal L} \times {\cal L}$, and for every $x_1,y_1,x_2,y_2 \in {\cal L }$, \smallskip \\
$\bullet$ $(x_1,y_1) \leq_i (x_2,y_2)$ if $x_1 \leq x_2$ and $y_1 \geq y_2$, \smallskip \\
$\bullet$ $(x_1,y_1) \leq_t (x_2,y_2)$ if $x_1 \leq x_2$ and $y_1 \leq y_2$.

An \emph{approximating operator} ${\cal O}:{\cal L}^2\rightarrow {\cal L}^2$ of an operator $O:{\cal L}\rightarrow {\cal L}$
is an operator that maps every approximation $(x,y)$ of an element $z$ to an approximation $(x',y')$ of another element $O(z)$, thus approximating the behavior of the approximated operator $O$. As an example, $\ICc_{\cal P}$ is an approximation operator of $\IC_{\cal P}$.

\begin{definition}
\label{def:approx-notions}
Let $O_{\cal L}:{\cal L}\rightarrow {\cal L}$ and ${\cal O}:{\cal L}^2\rightarrow {\cal L}^2$. (1)           ${\cal O}$ is {\em $\leq_i$-monotonic\/}, if when $(x_1,y_1)\leq_i(x_2,y_2)$, also ${\cal O}(x_1,y_1)\leq_i {\cal O}(x_2,y_2)$; 
 (2)            ${\cal O}$ is \emph{approximating\/}, if it is $\leq_i$-monotonic and for any $x\in {\cal L}$, $({\cal O}(x,x))_1=({\cal O}(x,x))_2$;\footnote{In some papers \citep{denecker2000approximations}, an approximation operator is defined as a symmetric $\leq_i$-monotonic operator, i.e.\ a $\leq_i$-monotonic operator s.t.\ for every $x,y\in {\cal L}$, ${\cal O}(x,y)=({\cal O}_l(x,y),{\cal O}_l(y,x))$ for some ${\cal O}_l:{\cal L}^2\rightarrow {\cal L}$. However, the weaker condition we take here (taken from \citet{denecker2002ultimate}) is actually sufficient for most results.} 
(3) ${\cal O}$ is an {\em approximation\/} of $O_{\cal L}$, if it is $\leq_i$-monotonic and
             ${\cal O}$ \emph{extends} $O$, that is: $({\cal O}(x,x))_1=({\cal O}(x,x))_2=O_{\cal L}(x)$.
 \end{definition}
To avoid clutter, we denote $({\cal O}(x,y))_1$ by ${\cal O}_l(x,y)$ and $({\cal O}(x,y))_2$ by ${\cal O}_u(x,y)$ (for \textit{lower} and \textit{upper} bound).

The \emph{stable operator\/}, defined next, is used for expressing the semantics of many non-monotonic formalisms.
\begin{definition}
 Given a complete lattice $L = \tup{{\cal L},\leq}$,  let ${\cal O}:{\cal L}^2 \rightarrow {\cal L}^2$ be an approximating operator.
${\cal O}_{l}(\cdot,y) = \lambda x.{\cal O}_{l}(x,y)$, i.e.: ${\cal O}_{l}(\cdot,y)(x) = {\cal O}_{l}(x,y)$ (and similarly for the upper bound operator ${\cal O}_u$).
 The \emph{stable operator for ${\cal O}$\/} is: 
         $S({\cal O})(x,y)=(\mathit{lfp}({\cal O}_l(.,y)),\mathit{lfp}({\cal O}_u(x,.))$.
         \end{definition}
The components $\mathit{lfp}({\cal O}_l(.,y))$ and $\mathit{lfp}({\cal O}_u(x,.)$ of $S({\cal O})$ will be denoted by $C({\cal O}_l)(y)$ respectively $C({\cal O}_u)(x)$.

Stable operators capture the idea of minimizing truth, since for any $\leq_i$-monotonic operator ${\cal O}$ on ${\cal L}^2$,
fixpoints of the stable operator $S({\cal O})$ are $\leq_t$-minimal fixpoints of ${\cal O}$ \cite[Theorem~4]{denecker2000approximations}.
Altogether, we obtain the following notions:
 Given a complete lattice $L = \tup{{\cal L},\leq}$,  let ${\cal O}:{\cal L}^2 \rightarrow {\cal L}^2$ be an approximating operator, $(x,y)$ is
 \begin{itemize}
 \item a \emph{Kripke-Kleene fixpoint} of ${\cal O}$ if $(x,y) =\lfp_{\leq_i}({\cal O}(x,y))$;
\item a \emph{three-valued stable} fixpoint of ${\cal O}$ if $(x,y)= S({\cal O})(x,y) $;
\item  a \emph{two-valued stable fixpoints} of ${\cal O}$ if $(x,y)= S({\cal O})(x,y)$; and $x=y$;
\item  the \emph{well-founded  fixpoint} of ${\cal O}$ if it is the $\leq_i$-minimal (three-valued) stable model fixpoint of ${\cal O}$. 
\end{itemize}
It has been shown that every approximation operator admits a unique $\leq_i$-minimal stable fixpoint \citep{denecker2000approximations}.
 \citet{pelov2007well} show that for normal logic programs, the fixpoints based on the four-valued immediate consequence operator $\ICc_{\cal P}$ (recall Definition \ref{def:operator:disj:lp})
for a logic program ${\cal P}$ give rise 
to the following correspondences: the three-valued stable fixpoints of $\ICc_{\cal P}$ coincide with the three-valued  stable semantics as defined by \citet{przymusinski1990well}, the well-founded 
fixpoint of  of $\ICc_{\cal P}$  coincides with the homonymous semantics \citep{przymusinski1990well,van1991well}, and the two-valued stable fixpoints of  $\ICc_{\cal P}$  coincide with the two-valued 
(or total) stable models.

\section{Conditional Independence}
\label{sec:conditional:independence}
Conditional independence in an operator-based setting is meant to formalize the idea that for the application of an operator to a lattice consisting of three sub-lattices ${\cal L}_1$, ${\cal L}_2$ and ${\cal L}_3$, full information about ${\cal L}_3$ allows us to ignore ${\cal L}_2$ when applying $O$ to ${\cal L}_1\otimes {\cal L}_3$.
In more detail, it means that the operator $O$ over $\bigotimes_{i\in \{1,2,3\}}\Lc_i$ can be decomposed in two operators $O_{1,3}$ and $O_{2,3}$ over the sub-lattices ${\cal L}_1\otimes {\cal L}_3$ respectively ${\cal L}_2\otimes {\cal L}_3$.
\begin{definition}
Let $O$ be an operator $\bigotimes_{i\in \{1,2,3\}} \Lc_i$. The lattices \emph{$\Lc_1$ and $\Lc_2$ are independent w.r.t.\ $\Lc_3$ according to $O$}, in symbols: $\Lc_1\indep_{O}\Lc_2\mid \Lc_3$, if there exist operators 
 \begin{eqnarray*}
 O_{i,3}:{\cal L}_i\otimes {\cal L}_3\rightarrow {\cal L}_i\otimes {\cal L}_3 \quad\mbox{ for }\:i=1,2
  \end{eqnarray*}
s.t.\ for every $x_1\in {\cal L}_1, x_2\in {\cal L}_2, x_3\in  {\cal L}_3$ it holds that:
 \begin{eqnarray*}
 &&O(x_1\otimes x_2\otimes x_3)_{|1,3}=O_{1,3}(x_1\otimes x_3)\quad \mbox{ and }\\
 &&O(x_1\otimes x_2\otimes x_3)_{|2,3}=O_{2,3}(x_2\otimes x_3)
 \end{eqnarray*}
\end{definition}
 Thus, two sub-lattices ${\cal L}_1$ and ${\cal L}_2$ are independent w.r.t.\ ${\cal L}_3$ according to $O$ if, once we have full information about ${\cal L}_3$, information about ${\cal L}_2$ does not contribute anything in the application of $O$ when restricted to ${\cal L}_1$ (and vice versa). Where $\Lc_1\indep_{O}\Lc_2\mid \Lc_3$, we will also call $\Lc_3$ the \emph{conditional pivot}, and will refer to members of $\Lc_3$ as such as well.

\begin{example}\label{example:infection:1}
Consider the logic program ${\cal P}$ using atoms for $\mathtt{inf}$ected, $\mathtt{vac}$cinated and $\mathtt{c}$o$\mathtt{n}$ta$\mathtt{ct}$:
\begin{eqnarray*}
&&r_1:\ \mathtt{inf}(\mathtt{b})\leftarrow \mathtt{inf}(\mathtt{a}),\mathtt{cnct}(\mathtt{a},\mathtt{b}), \mathrm{not}\: \mathtt{vac}(\mathtt{b}).\\
&&r_2:\ \mathtt{inf}(\mathtt{c})\leftarrow \mathtt{inf}(\mathtt{a}),\mathtt{cnct}(\mathtt{a},\mathtt{c}), \mathrm{not}\: \mathtt{vac}(\mathtt{c}).\\
&&r_3:\ \mathtt{inf}(\mathtt{a}).,\: r_4:\: \mathtt{cnct}(\mathtt{a},\mathtt{b}).,\: r_5:\: \mathtt{cnct}(\mathtt{a},\mathtt{c}).
\end{eqnarray*} 
Notice that, as soon as we know that $\mathtt{inf}(\mathtt{a}).$ is the case, we can decompose the search for models into  two independent parts, as can also be seen in the dependency graph in figure \ref{fig:dependency:graph}.
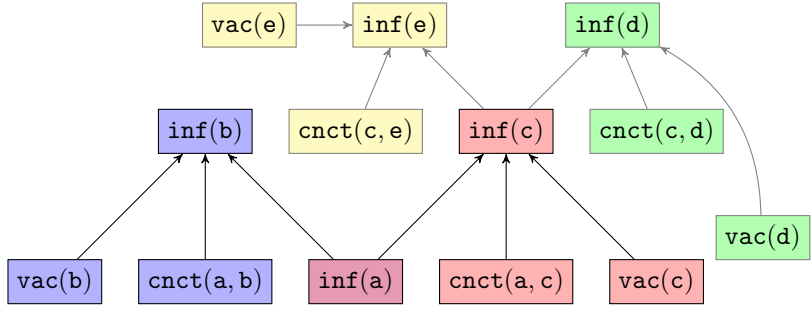
\begin{figure}[h]
\centering
{
\begin{tikzpicture}[->, >=stealth', node distance=2cm, scale=0.5]
    \node[draw,rectangle, fill=purple!40!white] (IJ) {$\mathtt{inf}(\mathtt{a})$};
    \node[draw,rectangle, left of=IJ, fill=blue!30!white] (CI) {$\mathtt{cnct}(\mathtt{a},\mathtt{b})$};
    \node[draw,rectangle, right of=IJ, fill=red!30!white] (CT) {$\mathtt{cnct}(\mathtt{a},\mathtt{c})$};
    \node[draw,rectangle, above of=CI, fill=blue!30!white] (II) {$\mathtt{inf}(\mathtt{b})$};
    \node[draw,rectangle, above of=CT, fill=red!30!white] (IT) {$\mathtt{inf}(\mathtt{c})$};
    \node[draw,rectangle, left of=CI, fill=blue!30!white] (VI) {$\mathtt{vac}(\mathtt{b})$};
    \node[draw,rectangle, right of=CT, fill=red!30!white] (VT) {$\mathtt{vac}(\mathtt{c})$};

    \node[draw=gray,rectangle,  above right of=IT, fill=green!30!white] (ID) {$\mathtt{inf}(\mathtt{d})$};
    \node[draw=gray,rectangle,  right of=IT, fill=green!30!white] (CD) {$\mathtt{cnct}(\mathtt{c},\mathtt{d})$};    
        \node[draw=gray,rectangle,  below right of=CD, fill=green!30!white] (VD) {$\mathtt{vac}(\mathtt{d})$};    

        \node[draw=gray,rectangle,  above left of=IT, fill=yellow!30!white] (IE) {$\mathtt{inf}(\mathtt{e})$};
        \node[draw=gray,rectangle,  left of=IT, fill=yellow!30!white] (CE) {$\mathtt{cnct}(\mathtt{c},\mathtt{e})$};   
            \node[draw=gray,rectangle,  left of=IE, fill=yellow!30!white] (VE) {$\mathtt{vac}(\mathtt{e})$};   

    \path (IJ) edge (II);
    \path (IJ) edge (IT);
    \path (CI) edge (II);
    \path (CT) edge (IT);
    \path (VI) edge (II);
    \path (VT) edge (IT);
    
     \path[color=gray] (IT) edge (IE);
     \path[color=gray] (IT) edge (ID);
     \path[color=gray] (CE) edge (IE);
     \path[color=gray] (VE) edge (IE);
	 \path[color=gray] (CD) edge (ID);
     \path[color=gray] (VD) edge[bend right=30] (ID);
\end{tikzpicture}}

\caption{\label{fig:dependency:graph} A dependency graph for the program ${\cal P}_1$ (Example 1), and its extension ${\cal P}_2$ (Example \ref{example:cit:tree}, atoms only occuring in ${\cal P}_2$ have gray outlines.).}
\end{figure}

As a product lattice consisting of power sets of sets ${\cal A}_1,\ldots,{\cal A}_3$ is isomorphic to the powerset of the union of these sets ${\cal A}_1\cup\ldots\cup{\cal A}_3$, we shall use them interchangeably. 
We let:\begin{eqnarray*}
{\cal A}_b & = & \{\mathtt{inf}(\mathtt{b}),\mathtt{cnct}(\mathtt{a},\mathtt{b}),\mathtt{vac}(\mathtt{b})\}\\
{\cal A}_c & = & \{\mathtt{inf}(\mathtt{c}),\mathtt{cnct}(\mathtt{a},\mathtt{c}),\mathtt{vac}(\mathtt{c})\}\\
{\cal A}_a & = & \{\mathtt{inf}(\mathtt{a})\}
\end{eqnarray*}
We see that $2^{{\cal A}_b}\indep_{\IC_{{\cal P}}} 2^{{\cal A}_c}\mid 2^{{\cal A}_a}$, by observing that:
\begin{eqnarray*}
\IC_{{\cal P}}^{{\cal A}_b,{\cal A}_a}=\IC_{{\cal P}^{{\cal A}_b,{\cal A}_a}} \mbox{ and } 
\IC_{{\cal P}}^{{\cal A}_c,{\cal A}_a}=\IC_{{\cal P}^{{\cal A}_c,{\cal A}_a}}
\end{eqnarray*}
where 
${\cal P}^{{\cal A}_b,{\cal A}_a}=  \{r_1,r_3,r_4\}$ and ${\cal P}^{{\cal A}_c,{\cal A}_a}=\{r_2,r_3,r_5\}$.
It is easily verified that for every $x_1\cup x_2\cup x_3\subseteq {\cal A}_{\cal P}$, it holds that $\IC_{\cal P}(x_b\cup x_c\cup x_a)\cap ({\cal A}_i\cup{\cal A}_a)=\IC_{{\cal P}^{{\cal A}_i,{\cal A}_a}}(x_i\cup x_a)$ for any $i=b,c$.
\end{example} 
 
Intuitively, our notion of conditional independence relates to the analogous notion known from probability theory as follows:
 two events $x_1$ and $x_2$ in probability theory are independent conditional on a third $x_3$ if we can calculate the probability of $x_1$ given $x_2$ and $x_3$ as the probability of $x_1$ given $x_3$, i.e.\ $P(x_1\mid x_2 x_3)=P(x_1\mid x_3)$. The idea of conditional probability relative to an operator is the same: we can apply the operator to one of the two sub-lattices independently of the second, independent sub-lattice.

We now undertake a study of the properties of operators that respect conditional independencies.  In probability theory, an equivalent definition of conditional independence is given by: $P(x_1,x_2|x_3)=P(x_1|x_3)P(x_2|x_3)$. This property or definition is mirrored in the following fact, where we show that the application of an operator over the entire lattice can likewise be split up over the two independent sublattices $\Lc_1$ and $\Lc_2$ (together with the conditional pivot $\Lc_3$):

\begin{factAprep}
Let an operator $O$  on $\otimes_{i\in \{1,2,3\}} {\cal L}_i$ s.t.\ $\Lc_1\indep_O \Lc_2\mid \Lc_3$ be given. Then  for any $x_1\otimes x_2\otimes x_3\in \bigotimes_{i\in \{1,2,3\}}{\cal L}_i$, it holds that:
\begin{align*}
O(x_1\otimes x_2\otimes x_3) &= O_{1,3}(x_1\otimes x_3)\otimes O_{2,3}(x_2\otimes x_3)_{|2}\\
&=O_{1,3}(x_1\otimes x_3)_{|1}\otimes O_{2,3}(x_2\otimes x_3) .
\end{align*}
Furthermore, for any $i,j=1,2$, $i\neq j$, $x_i\in{\cal L}_i, x_j,x'_j\in {\cal L}_j$ and $x_3\in{\cal L}_3$ it holds that:
\begin{align*}
O(x_i\otimes x_j\otimes x_3)_{|i,3}=O(x_i\otimes x'_j\otimes x_3)_{|i,3}
\end{align*}
\end{factAprep} 
\begin{appendixproof}
Immediate in view of the definition of conditional independence.
\end{appendixproof}

In the appendix, we also study the semi-graphoid-properties \citep{pearl1989conditional}.

\begin{toappendix}
\begin{remark}
We now demonstrate that our conditional independence also shows some differences with conditional independence as known from probability theory. For example, not all semi-graphoid-properties \citep{pearl1989conditional} are satisfied. In more detail, whereas \emph{symmetry} (i.e.\ $\Lc_1\indep_O \Lc_2\mid \Lc_3$ implies $\Lc_2\indep_O \Lc_1\mid \Lc_3$) is obviously satisfied, the properties of \emph{decomposition} (i.e.\  $\Lc_1\indep_O \Lc_2\otimes \Lc_3\mid \emptyset$ implies $\Lc_1\indep_O\Lc_2\mid \emptyset$) and \emph{weak union} (i.e.\  $\Lc_1\indep_O \Lc_2\otimes \Lc_3\mid \emptyset$ implies $\Lc_1\indep_O \Lc_2\mid \Lc_3$) are not satisfied. Regarding decomposition, it should be noted that this property is undefined as we assume conditional independence over decompositions of the complete lattice. A violation of weak union is illustrated in the following example:
\begin{example}
Consider the logic program ${\cal P}=\{a\leftarrow; b\leftarrow \lnot c; c\leftarrow \lnot b\}$. Note that $2^{\{a\}}\indep_{\IC_{\cal P}} 2^{\{b,c\}}\mid \emptyset$. Yet it does \emph{not} hold that $2^{\{a\}}\indep_{\IC_{\cal P}} 2^{\{c\}}\mid 2^{\{b\}}$, as:
\begin{eqnarray*}
&\IC_{\cal P}(\{a\})\cap \{a,c\}&=\{a,c\} \quad \neq \\
&\IC_{\cal P}(\{a,b\})\cap \{a,c\}&=\{a\}
\end{eqnarray*}
\end{example}
The reason for the failure of weak union is that we are not only interested in the behaviour of the operator $O$ w.r.t.\ the conditionally independent sub-lattices $L_1$ and $L_2$, but also take into account the conditional pivot $L_3$. Contrast this with probabilistic conditional independence where the defining condition $p(x_1\mid x_3)=p(x_1\mid x_2,x_3)$ only talks about $L_1$. The reason that conditional pivots are taken into account in our account of conditional indepdence is that we are usually interested in fixpoints of an operator. It might be interesting to look at a weaker notion of conditional independence w.r.t.\ operators that does not consider the conditional pivot in the output of the operator (and indeed, it is not hard to see that weak union is satisfied for such a notion), but due to our focus on fixpoints, we restrict attention to the stronger notion.
\end{remark}
\end{toappendix}
The output of an operator w.r.t.\ the conditional pivot only depends on the input w.r.t.\ the conditional pivot:
\begin{lemmaAprep}\label{lemma:operators:agree:on:pivot}
Let an operator $O$  on $\otimes_{i\in \{1,2,3\}} {\cal L}_i$ s.t.\ $\Lc_1\indep_O\Lc_2\mid\Lc_3$ be given. Then $O_{2,3}(x_2\otimes x_3)_{|3}=O_{1,3}(x_1\otimes x_3)_{|3}$ for every $x_1\otimes x_2\otimes x_3\in \otimes_{i\in \{1,2,3\}} {\cal L}_i$.
\end{lemmaAprep}
\begin{appendixproof}
As $\Lc_1\indep_O \Lc_2\mid \Lc_3$, for any  $x_1\otimes x_2\otimes x_3\in \otimes_{i\in \{1,2,3\}} {\cal L}_i$, $O(x_1\otimes x_2\otimes x_3)=O_{1,3}(x_1\otimes x_3)\otimes O_{2,3}(x_2\otimes x_3)_{|2}=O_{1,3}(x_1\otimes x_3)_{|1}\otimes O_{2,3}(x_2\otimes x_3)$, which implies $O_{1,3}(x_1\otimes x_3)_{|3}=O_{2,3}(x_2\otimes x_3)_{|3}$.
\end{appendixproof}

We now show one of the central results, namely that fixpoints of an operator $O$ respecting independence of $\Lc_1$ and $\Lc_2$ w.r.t.\ $\Lc_3$ can be obtained by combining the fixpoints of $O_{1,3}$ and $O_{2,3}$. Thus, the search for fixpoints, a central task in KR, can be split into two parallel, smaller searches:
\begin{propositionAprep}\label{prop:fixed:points:preserved}
Let an operator $O$  on  $\bigotimes_{i\in \{1,2,3\}} \Lc_i$ s.t.\ $\Lc_1\indep_O \Lc_2\mid \Lc_3$ be given. Then
$x_1\otimes x_2\otimes x_3=O(x_1\otimes x_2\otimes x_3)$ iff $x_1\otimes x_3=O_{1,3}(x_1\otimes x_3)$ and $x_2\otimes x_3=O_{2,3}(x_2\otimes x_3)$ (for any $x_1\otimes x_2\otimes x_3\in \bigotimes_{i\in \{1,2,3\}}\Lc_i$).
\end{propositionAprep}
\begin{appendixproof}
For the $\Rightarrow$-direction, suppose that $x_1\otimes x_2\otimes x_3=O(x_1\otimes x_2\otimes x_3)$. Since $\Lc_1\indep_O \Lc_2\mid\Lc_3$, $O_{i,3}(x_i\otimes x_3)=O(x)_{|i,3}=x_{|i,3}=x_i\otimes x_3$ (for $i=1,2$). For the $\Leftarrow$-direction, suppose that $x_1\otimes x_3=O(x_1\otimes x_3)$ and $x_2\otimes x_3=O(x_2\otimes x_3)$. As  $\Lc_1\indep_O \Lc_2\mid \Lc_3$, $O(x_1\otimes x_2\otimes x_3)=O_{1,3}(x_1\otimes x_3)\otimes O_{2,3}(x_2\otimes x_3)_{|2}=x_1\otimes x_2\otimes x_3=x$.
\end{appendixproof}

A second central insight is that monotonicity is preserved when moving between a product lattice and its components:
\begin{propositionAprep}\label{prop:monotonicity:preserved}
Let an operator $O$  on $\bigotimes_{i\in \{1,2,3\}} \Lc_i$ s.t.\ ${\Lc_1\indep_O \Lc_2\mid \Lc_3}$ be given. Then $O$ is $\leq_\otimes$-monotonic iff $O_{i,3}$ over ${\cal L}_i\otimes {\cal L}_3$ is $\leq^{i,3}_\otimes$-monotonic for $i=1,2$.
\end{propositionAprep}
\begin{appendixproof}
In what follows we let $I=\{1,2,3\}$.
For the $\Rightarrow$-direction, suppose that $O$ is $\leq^I_\otimes$-monotonic and consider some $x_1^1\otimes x_3^1\leq^{1,3}_\otimes x_1^2 \otimes x_3^2$. Notice that $O(x_1^1\otimes x_2 \otimes x_3^1)\leq_\otimes O(x_1^2\otimes x_2 \otimes x_3^2)$ for any $x_2 \in L_2$ (as $O$ is $\leq_\otimes^I$-monotonic). This means that $O_{1,3}(x_1^1\otimes  x_3^1)\leq^{1,3}_\otimes O_{1,3}(x_1^2 \otimes x_3^2)$ by definition of $\leq_\otimes$ and since $L_1\indep_O L_2\mid L_3$.
For the $\Leftarrow$-direction, suppose that $O_{i,3}$ are $\leq$-monotonic for $i=1,2$. Consider some $x^1,x^2\in \bigotimes_{i\in \{1,2,3\}} S_i$ with $x^1\leq^{I}_\otimes x^2$. Then $O_{i,3}(x^1_{|i,3})\leq^{i,3}_\otimes O_{i,3}(x^2_{|i,3})$ for $i=1,2$ which implies $O_{1,3}(x^1_{|1,3})\otimes O_{2,3}(x^1_{|2,3})_{|2}\leq^I_\otimes O_{1,3}(x^2_{|1,3})\otimes O_{2,3}(x^2_{|2,3})_{|2}$ by definition of $\leq^I_\otimes$. With conditional independence, $O(x^j)=O_{1,3}(x^j_{|1,3})\otimes O_{2,3}(x^j_{|2,3})_{|2}$ for $j=1,2$. Thus, $O(x^1)\leq^I_\otimes O(x^2)$.
\end{appendixproof}

Finally, for monotonic operators over complete lattices, the \emph{least} fixed points can be obtained by combining the least fixed points of conditionally independent sub-lattices:
\begin{propositionAprep}\label{prop:lfps:preserved}
Let a $\leq_\otimes$-monotonic operator $O$  on the complete lattice $\bigotimes_{i\in \{1,2,3\}} \Lc_i$ s.t.\ ${\Lc_1\indep_O \Lc_2\mid \Lc_3}$ be given. Then $x_1\otimes x_2\otimes x_3$ is the least fixed point of $O$ iff $x_i\otimes x_3$ is the least fixed point of $O_{i,3}$ (for $i=1,2$ and any $x_1\otimes x_2\otimes x_3\in\bigotimes_{i\in \{1,2,3\}} \Lc_i$).
\end{propositionAprep}
\begin{appendixproof}
Suppose first that $x_1\otimes x_2\otimes x_3$ is the least fixed point of $O$. We show that $x_1\otimes x_3$ is a least fixed point of $O_{1,3}$ (which suffices with symmetry).  With Proposition \ref{prop:fixed:points:preserved}, $x_1\otimes x_3=O_{1,3}(x_1\otimes x_3)$. 
It thus suffices to show that for any fixed point $x'_1\otimes x'_3$ of $O_{1,3}$, $x'_1\otimes x'_3\geq_\otimes^{I_1\cup I_3} x_1\otimes x_3$.  Assume thus that $x'_1\otimes x'_3=O_{1,3}(x'_1\otimes x'_3)$.
First, observe that $\bot_2\otimes x_3\leq_\otimes^{I_2\cup I_3} x'_2\otimes x_3$ for any $x'_2$. By Lemma \ref{lemma:operators:agree:on:pivot}, $O_{2,3}(x'_2\otimes x'_3)_{|3}=O_{1,3}(x'_1\otimes x'_3)=x'_3$ for any $x'_2\in {\cal L}_2$. Thus, $O_{2,3}(\bot_2\otimes  x'_3)\geq_\otimes^{I_2\cup I_3} \bot_2\otimes x'_3$, i.e.\ $\bot_2\otimes x'_3$ is a pre-fixpoint of $O_{2,3}$. We can apply $O_{2,3}$ inductively, and, as it is a $\leq_\otimes^{I_2\cup I_3}$-monotonic operator (Proposition \ref{prop:monotonicity:preserved}), a fixpoint is guaranteed to exist. Thus, there is some $x'_2\in{\cal L}_2$ s.t.\ $O(x'_2\otimes x'_3)=x'_2\otimes x'_3$. This means that $x'_1\otimes x'_2\otimes x'_3$ is a fixpoint of $O$, which implies $x_1\otimes x_2\otimes x_3\leq_\otimes x'_1\otimes x'_2\otimes x'_3$, which on its turn implies (by definition of $\leq_\otimes$), $x_1\otimes x_3\leq x'_1\otimes x'_3$.

Suppose now that $x_{|i,3}$ is a least fixed point of $O_{i,3}$ (for $i=1,2$). With Proposition \ref{prop:fixed:points:preserved}, $x_1\otimes x_2\otimes x_3$ is a fixed point of $O$. We show that for any fixed point $x'$ of $O$, $x'\geq_\otimes x_1\otimes x_2\otimes x_3$. Indeed, with Proposition \ref{prop:fixed:points:preserved}, $x'_{|i,3}$ is a fixed point of $O_{i,3}$, which implies that $x'_{|i,3}\geq x_{|i,3}$. By definition of $\leq_\otimes$, $x'_{1}\otimes x'_2\otimes x'_3\geq_\otimes x_1\otimes x_2\otimes x_3$.
\end{appendixproof}

\section{Conditional Independence and AFT}
\label{sec:conditional:independence:aft}
The notion of conditional independence is immediately applicable to approximation operators. 
In this section, we derive results on the modularisation of AFT-based semantics based on the results from the previous section. 

As observed by \citet{vennekens2006splitting}, the bilattice ${\cal L}^2$ of a product lattice ${\cal L}=\bigotimes_{i\in I} {\cal L}_i$ is isomorphic to the product lattice of bilattices $\bigotimes_{i\in I} {\cal L}^2_i$, and we move between these two constructs without further ado. 

As an approximation operator is a $\leq_i$-monotonic operator, we immediately obtain that the search for fixpoints, including the Kripke-Kleene fixpoints, can be split on the basis of conditional independence of an approximator:
\begin{propositionAprep}
Let an approximation operator ${\cal O}$ over a bilattice of the product lattice $\bigotimes_{i\in \{1,2,3\}} {\cal L}_i$ be given s.t.\ ${\cal L}^2_1\indep_{\cal O} {\cal L}^2_2\mid {\cal L}^2_3$. Then the following hold:
\begin{itemize}
\item  $(x,y)$ is the Kripke-Kleene fixpoint of ${\cal O}$ iff $(x_{|i,3},y_{|i,3})$ is the Kripke-Kleene fixpoint of ${\cal O}_{i,3}$ for $i=1,2$. 
\item $(x,y)$ is a fixpoint of ${\cal O}$ iff $(x_{|i,3},y_{|i,3})$ is a fixpoint of ${\cal O}_{i,3}$ for $i=1,2$. 
\end{itemize}
\end{propositionAprep}
\begin{appendixproof}
This is an immediate consequence of Propositions \ref{prop:fixed:points:preserved}, \ref{prop:monotonicity:preserved} and \ref{prop:lfps:preserved}.
\end{appendixproof}

We now turn our attention to the stable operators. We first investigate the relation between an approximation operator and the lower and upper-bound component of this operator when it comes to respecting independencies. It turns out that the component operators ${\cal O}_l$ and ${\cal O}_u$ respect conditional independencies, and, vice-versa, that the respect of the two component operators of conditional independencies implies respect of these independencies by the approximator:
\begin{propositionAprep}\label{prop:conditional:independence:approx:components}
Let an approximation operator ${\cal O}$ over a bilattice of the product lattice $\bigotimes_{i\in \{1,2,3\}} {\cal L}_i$ be given. Then ${\cal L}^2_1\indep_{{\cal O}} {\cal L}^2_2\mid {\cal L}^2_3$ iff ${\cal L}^2_1\indep_{{\cal O}_l} {\cal L}^2_2\mid {\cal L}^2_3$ and ${\cal L}^2_1\indep_{{\cal O}_u} {\cal L}^2_2\mid {\cal L}^2_3$.\footnote{Notice that we slightly abuse notation here, as ${\cal O}_l$ and ${\cal O}_u$ map from ${\cal L}^2$ to ${\cal L}$. However, one can easily obtain an operator ${\cal O}'_l$ on ${\cal L}^2$ by e.g.\ defining ${\cal O}_l(x,y)=({\cal O}'_l(x,y),\top)$.}
\end{propositionAprep}
\begin{appendixproof}
For the $\Rightarrow$-direction, suppose that ${\cal L}^2_1\indep_{{\cal O}} {\cal L}^2_2\mid {\cal L}^2_3$, i.e.\ there are some ${\cal O}_{i,3}:{\cal L}_i\otimes {\cal L}_3\rightarrow {\cal L}_i\otimes {\cal L}_3$ s.t.\
${\cal O}( (x_1,y_1)\otimes (x_2,y_2)\otimes (x_3,y_3))_{|i,3}=
{\cal O}_{i,3}((x_i,y_i)\otimes (x_3,y_3))$ (for $i=1,2$). 
As ${\cal O}( (x_1,y_1)\otimes (x_2,y_2)\otimes (x_3,y_3))=({\cal O}_l (x_1\otimes x_2\otimes x_3,y_1\otimes y_2\otimes y_3),{\cal O}_u (x_1\otimes x_2\otimes x_3,y_1\otimes y_2\otimes y_3))$, this means there are some $({\cal O}_l)_{i,3}$ and $({\cal O}_u)_{i,3}$ s.t.\ ${\cal O}_\dagger (x_1\otimes x_2\otimes x_3,y_1\otimes y_2\otimes y_3)_{|i,3}= ({\cal O}_\dagger)_{i,3}(x_i\otimes x_3,y_i\otimes y_3)$ for $\dagger \in \{l,u\}$ and $i=1,2$, which implies ${\cal L}^2_1\indep_{{\cal O}_l} {\cal L}^2_2\mid {\cal L}^2_3$ and ${\cal L}^2_1\indep_{{\cal O}_u} {\cal L}^2_2\mid {\cal L}^2_3$.

The $\Leftarrow$-direction is similar.
\end{appendixproof}

Stable operators respect the conditional independencies respected by the approximator from which they derive:
\begin{propositionAprep}\label{prop:complete:operators}
Let an approximation operator ${\cal O}$ over a bilattice of the complete product lattice $\bigotimes_{i\in \{1,2,3\}}{\cal L}_i$ be given s.t.\ ${\cal L}^2_1\indep_{\cal O} {\cal L}^2_2\mid {\cal L}^2_3$.
Then ${\cal L}_1\indep_{{\cal C}({\cal O}_l)} {\cal L}_2\mid {\cal L}_3$ and ${\cal L}_1\indep_{{\cal C}({\cal O}_u)} {\cal L}_2\mid {\cal L}_3$.
\end{propositionAprep}
\begin{appendixproof}
 As $L^2_1\indep_{\cal O} L^2_2\mid L^2_3$, with Proposition \ref{prop:conditional:independence:approx:components}, $L_1\indep_{{\cal O}_l} L_2\mid L_3$. As ${\cal O}_l(.,y)$ is $\leq$-monotonic (\cite[Proposition 6]{denecker2000approximations}), $\mathrm{lfp}({\cal O}_l(.,y))=\mathrm{lfp}(  ({\cal O}(.,y))_{1,3})\otimes  (\mathrm{lfp}(({\cal O}_l(.,y))_{2,3})_{|2})$ (Proposition \ref{prop:lfps:preserved}).
As $({\cal O}_l(x,y))=({\cal O}_l)_{1,3}(x_{|1,3},y_{|1,3})\otimes ({\cal O}_u)_{2,3}(x_{|2,3},y_{|2,3})_{|2}$ for any $x\in {\cal L}$ (in view of $L_1\indep_{{\cal O}_l} L_2\mid L_3$), we see that ${\cal O}_l(.,y)))_{1,3}=({\cal O}_l)_{1,3}(.,y_{|1,3})$. Thus, $\mathrm{lfp}({\cal O}_l(.,y))=\mathrm{lfp}(  ( {\cal O}_l)_{1,3})(.,y_{|1,3}))\otimes  \mathrm{lfp}(({\cal O}_l)_{2,3}(.,y_{|2,3}))_{|2}$. 
As 
${\cal C}({\cal O}_l)(y)=\mathrm{lfp}({\cal O}(.,y))$ for any $y\in {\cal L}$, this concludes the proof.
\end{appendixproof}
This allows us to derive another central result, stating that search for stable fixpoints, including the well-founded one, can be split up on the basis of conditional independencies.

\begin{propositionAprep}
Let an approximation operator ${\cal O}$ over a bilattice of the product lattice $\bigotimes_{i\in \{1,2,3\}} S_i$ be given s.t.\ ${\cal L}^2_1\indep_{\cal O} {\cal L}^2_2\mid {\cal L}^2_3$. Then:
\begin{enumerate}
\item $(x,y)$ is a fixpoint of $S({\cal O})$ iff 
$(x_{|i,3},y_{|i,3})$ is a fixpoint of $S({\cal O}_{i,3})$ for $i=1,2$.
\item $(x,y)$ is the well-founded fixpoint of ${\cal O}$ iff 
$(x_{|i,3},y_{|i,3})$ is the well-founded fixpoint of ${\cal O}_{i,3}$ for $i=1,2$.
\end{enumerate}
\end{propositionAprep}
\begin{appendixproof}
This follows immediately from Propositions \ref{prop:lfps:preserved} and \ref{prop:complete:operators}. 
\end{appendixproof}

\section{Parametrized Complexity Results Based on Conditional Independence}\label{sec:fpt}
In this section, we show how the notion of conditional independence can be made useful to break up reasoning tasks into smaller tasks that can be solved in parallel. We do this by introducing \textit{conditional independence trees}, and illustrate its usefulness by showing that the size of the induced modules serves as a parameter for the fixed parameter tractability of calculating the well-founded fixpoint.

\subsection{Preliminaries on Parametrized Complexity}
In this section, we recall the necessary background on (parametrized) complexity. For more detailed references, we refer to \cite{downey2013fundamentals}. We assume familiarity with basics on the polynomial hierarchy. An  instance of a \emph{parametrized problem L} is a pair $(I,k)\in \Sigma^\ast\times\mathbb{N}$ for some finite alphabet $\Sigma$, and we call $I$ the \emph{main part} and $k$ the \emph{parameter}.
Where $|I|$ denotes the cardinality of $I$, $L$ is \emph{fixed-parameter tractable} if there exists a computable function $f$ and a constant $c$ such that $(I,k)\in L$ is decidable in time $O(f(k)|I|^c)$. Such an algorithm is called a \emph{fixed-parameter tractable algorithm}. Thus, the intuition is that when $f(k)$ remains sufficiently small, the problem remains tractable no matter the size of $I$.

\subsection{Conditional Independence Trees}
How can conditional independence be used to make reasoning more efficient? Conditional independence allows to split up the application of an operator over lattice elements to parallel reasoning over elements of the sub-lattices.These sub-lattices thus define \emph{sub-modules} for parallel reasoning. 
Conditional independence only allows us to  split up reasoning into two modules, thus only splitting the search space in half at best. However, the operator allowing, these modules can again be split up into smaller modules, leading to a tree structure of nested modules, which we call \emph{conditional independence trees}:

\begin{definition}
Let $\bigotimes_{i\in I} {\cal L}_i$ be a product lattice. We call a binary labelled tree $(V,E,\nu)$ a \emph{conditional independence tree} for ${\cal O}$ (in short, CIT) if the following holds:
\begin{itemize}
\item $\nu:V\rightarrow 2^I\times 2^I\times 2^I$,
\item the root is labelled $\langle I_1,I_2,I_3\rangle$ where $I_1,I_2,I_3$ is a partition of $I$,
\item for every $(v_1,v_2),(v_1,v_3)\in E$, where $\nu(v_i)=\langle I^i_1,I^i_2,I^i_3\rangle$ for $i=1,2,3$, $I^1_j\cup I^1_3=I_1^{j+1}\cup I_2^{j+1}\cup I_3^{j+1}$ for $j=1,2$.
\item $\bigotimes_{i\in I_1}{\cal L}_i\indep_{O_{I_1\cup I_2\cup I_3}} \bigotimes_{i\in I_2}{\cal L}_i\mid \bigotimes_{i\in I_3}{\cal L}_i$ for every  $v\in V$ with $\nu(v)=\langle I_1, I_2, I_3\rangle$.
\end{itemize}
\end{definition}
Thus, a CIT is a tree where each vertex is labelled with a partition $\langle I_1,I_2,I_3\rangle$ of a sub-lattice $\bigotimes_{i\in I_1\cup I_2\cup I_3}\Lc_i$ s.t.\ $\bigotimes_{i\in I_1}\Lc_i$ is independent w.r.t.\ $\bigotimes_{i\in I_2}\Lc_i$ given $\bigotimes_{i\in I_3}\Lc_i$ according to $O$, and such that the labels of the leaves of a node compose the sub-lattices mentioned in the  the labels of the parent node.
Finally, the root of the tree should be a decomposition of the original lattice.

\begin{example}\label{example:cit:tree}
We consider an extension of Example \ref{example:infection:1} where ${\cal P}_2=\{r_1,r_2,r_3,r_4,r_5,r_6,r_7, r_8\}$ with:
\begin{eqnarray*}
&&r_6:\ \mathtt{inf}(\mathtt{d})\leftarrow \mathtt{inf}(\mathtt{c}),\mathtt{cnct}(\mathtt{c},\mathtt{d}), \mathrm{not}\: \mathtt{vac}(\mathtt{d}).\\
&&r_7:\ \mathtt{inf}(\mathtt{e})\leftarrow \mathtt{inf}(\mathtt{c}),\mathtt{cnct}(\mathtt{c},\mathtt{e}), \mathrm{not}\: \mathtt{vac}(\mathtt{e}).\\
&&r_8:\ \mathtt{cnct}(\mathtt{c},\mathtt{d}).,\: r_9:\: \mathtt{cnct}(\mathtt{c},\mathtt{e}).
\end{eqnarray*}

Furthermore, we let: 
\begin{eqnarray*}
{\cal A}_e & = &  \{\mathtt{inf}(\mathtt{e}),\mathtt{cnct}(\mathtt{c},\mathtt{e}),\mathtt{vac}(\mathtt{e})\}\\
{\cal A}_d & = & \{\mathtt{inf}(\mathtt{d}),\mathtt{cnct}(\mathtt{d},\mathtt{e}),\mathtt{vac}(\mathtt{d})\}\\
\end{eqnarray*}
The dependency-graph for this program is given in Figure \ref{fig:dependency:graph}. We observe the following independencies: 
\begin{eqnarray*}
&2^{{\cal A}_b}\bot_{\IC_{{\cal P}_2}} 2^{{\cal A}_c\cup {\cal A}_e\cup {\cal A}_d} \mid 2^{{\cal A}_a}\\
&2^{{\cal A}_e}\bot_{\IC_{{\cal P}_2}} 2^{{\cal A}_d} \mid 2^{{\cal A}_a\cup {\cal A}_c}
\end{eqnarray*}
We can accordingly obtain the following CIT:
\begin{center}
\begin{tikzpicture}[->, >=stealth', node distance=2cm, scale=0.5]
    \node[fill=gray!50] (root) {$\langle {\color{blue!60!white}{\cal A}_b}, {\color{red!60!white}{\cal A}_c}\cup {\color{yellow}{\cal A}_e}\cup {\color{green!60!white}{\cal A}_d}, {\color{purple!60!white}{\cal A}_a}\rangle $};
    \node[fill=gray!50, below left of=root] (left) {$\langle {\color{blue!60!white}{\cal A}_b},\emptyset,{\color{purple!60!white}{\cal A}_a}\rangle$};
      \node[fill=gray!50, below right of=root] (right) {$\langle {\color{yellow}{\cal A}_e},{\color{green!60!white}{\cal A}_d},{\color{red!60!white}{\cal A}_c}\cup{\color{purple!60!white}{\cal A}_a}\rangle$};
    
    \path (root) edge (left);
    \path (root) edge (right);
\end{tikzpicture}
\end{center}
This CIT can be used to reduce global reasoning to parallel instances of local reasoning along the different components occuring in the leaves of the CIT. For example,
we can calculate
${\sf WF}({\cal P}_2)= {\sf WF}({\color{blue!60!white}{\cal P}_b}\cup {\color{purple!60!white}{\cal P}_a})\cup {\sf WF}({\color{yellow}{\cal P}_e}\cup {\color{red!60!white}{\cal P}_e}\cup  {\color{purple!60!white}{\cal P}_c})\cup {\sf WF}({\color{green!80!blue}{\cal P}_d}\cup {\color{red!60!white}{\cal P}_a}\cup  {\color{purple!60!white}{\cal P}_a})$ (where ${\cal P}_d=\{r_6, r_8\}$ and ${\cal P}_e=\{r_7, r_9\}$):
\begin{align*}
& {\sf WF}({\color{blue!60!white}{\cal P}_b}\cup {\color{purple!60!white}{\cal P}_a}) = \{\mathtt{inf}(\mathtt{a}), \mathtt{cnct}(\mathtt{a},\mathtt{b}), \mathtt{inf}(\mathtt{b})\}\\[1ex]
 &{{\sf WF}({\color{yellow}{\cal P}_e}\cup {\color{red!60!white}{\cal P}_c}\cup  {\color{purple!60!white}{\cal P}_a})}=\\
&\{\mathtt{inf}(\mathtt{a}), \mathtt{cnct}(\mathtt{a},\mathtt{c}), \mathtt{inf}(\mathtt{c}),  \mathtt{cnct}(\mathtt{c},\mathtt{e}), \mathtt{inf}(\mathtt{e})\}\\[1ex]
 & {\sf WF}({\color{green!80!blue}{\cal P}_d}\cup {\color{red!60!white}{\cal P}_c}\cup  {\color{purple!60!white}{\cal P}_a}) =\\
&\{\mathtt{inf}(\mathtt{a}), \mathtt{cnct}(\mathtt{a},\mathtt{c}), \mathtt{inf}(\mathtt{c}),  \mathtt{cnct}(\mathtt{c},\mathtt{d}), \mathtt{inf}(\mathtt{d})\}\\[1ex]
&{\sf WF}({\color{blue!60!white}{\cal P}_b}\cup
 {\color{red!60!white}{\cal P}_c}\cup
 {\color{purple!60!white}{\cal P}_a}\cup
 {\color{yellow}{\cal P}_e}\cup 
 {\color{green!80!blue}{\cal P}_d})=\\
 &{\sf WF}({\color{blue!60!white}{\cal P}_b}\cup {\color{purple!60!white}{\cal P}_a})\cup {\sf WF}({\color{yellow}{\cal P}_e}\cup {\color{red!60!white}{\cal P}_e}\cup  {\color{purple!60!white}{\cal P}_c})\cup {\sf WF}({\color{green!80!blue}{\cal P}_d}\cup {\color{red!60!white}{\cal P}_c}\cup  {\color{purple!60!white}{\cal P}_a})=\\
 &\{\mathtt{inf}(\mathtt{a}), \mathtt{cnct}(\mathtt{a},\mathtt{b}), \mathtt{inf}(\mathtt{b}), \mathtt{cnct}(\mathtt{a},\mathtt{c}), \\ &\mathtt{inf}(\mathtt{c}),  \mathtt{cnct}(\mathtt{c},\mathtt{e}), \mathtt{inf}(\mathtt{e}), \mathtt{cnct}(\mathtt{c},\mathtt{d}), \mathtt{inf}(\mathtt{d})\}
\end{align*}

\end{example}

The following result shows that a CIT-tree correctly decomposes reasoning problems in KR:
\begin{propositionAprep}\label{lemma:cit:gives:fps}
Let an operator $O$ over $\bigotimes_{i\in I} {\cal L}_i$ and CIT $T=(V,E,\nu)$ be given with $V_{\sf l}$ the leafs of $T$. Then:
\begin{enumerate}
\item $\lfp(O)=\bigotimes_{\langle I_1,I_2,I_3\rangle \in V_{\sf l}}\lfp(O_{I_1\cup I_3})\otimes \lfp(O_{I_2\cup I_3})$,
\item $x$ is a fixpoint of $O$ iff for every $\langle I_1,I_2,I_3\rangle \in V_{\sf l}$ and $i=1,2$,
 $x_{|I_i\cup I_3}$ is a fixpoint of $O_{I_i\cup I_3}$.
\end{enumerate}
\end{propositionAprep}
\begin{appendixproof}
We show this by induction on the size $|E|$ of CIT $T=(V,E,\nu)$ The base case is $|E|=0$ is trivial. For the inductive case, suppose the two statements hold for any CIT $T'=(V',E',\nu')$ with $|E'|\leq n$ and consider the CIT $T=(V,E,\nu)$ with $|E|= n+2$. We show the two claims for $T$:
\begin{enumerate}
\item Let $T'=(V',E',\nu')$ where
$V'=V\setminus V_{\sf l}$
$E'=E\cap (V'\times V')$, and
$\nu'(v)=\nu(v)$ for every $v\in V'$. Notice that $T'$ is also a CIT, and as $E'\subset E$, $|E'|\leq n$. Thus, with the inductive hypothesis, ($\dagger$): $\lfp(O)=\bigcup_{\langle I_1,I_2,I_3\rangle \in V'_{\sf l}}\lfp(O_{I_1\cup I_3})\otimes \lfp(O_{I_2\cup I_3})$, where $V'_{\sf l}$ are the leaf nodes of $V'$. Consider now some $v\in V'_{\sf l}$ with children nodes $v_1$ and $v_2$ and $\nu(v)=\langle I_1,I_2,I_3\rangle$. 
By definition of a CIT, where $\nu(v_1)=\langle I_1^1,I_2^1,I_3^1\rangle$ and  $\nu(v_1)=\langle I_1^2,I_2^2,I_3^2\rangle$, $\bigotimes_{i\in I^j_1} {\cal L}_i\bot_{{O}_{I^j_1\cup I^j_2\cup I^j_3}}\bigotimes_{i\in I^j_2} {\cal L}_i\mid \bigotimes_{i\in I^j_3} {\cal L}_i$ for $j=1,2$, which means (with Proposition \ref{prop:lfps:preserved}), that $\lfp(O_{I^j_1\cup I^j_2\cup I^j_3})=\lfp(O_{I^j_1\cup  I^j_3})\otimes \lfp(O_{I^j_2\cup  I^j_3})$ for $j=1,2$. 
As $I_1^1\cup I_2^1\cup I_3^1=I_1\cup I_3$ and $I_1^2\cup I_2^2\cup I_3^2=I_2\cup I_3$ and $\bigotimes_{i\in I_1} {\cal L}_i\bot_{{O}_{I_1\cup I_2\cup I_3}}\bigotimes_{i\in I_2} {\cal L}_i\mid \bigotimes_{i\in I_3} {\cal L}_i$, we conclude that ($\ddagger$): $\lfp(O_{I_1\cup I_2\cup I_3})=   \lfp(O_{I_1\cup I_3})\otimes \lfp(O_{I_2\cup I_3})=\lfp(O_{I^1_1\cup I^1_2\cup I^1_3})\otimes \lfp(O_{I^2_1\cup I^2_2\cup I^2_3})$. As this line of reasoning holds for any $v'\in V'_{\sf l}$, and  ($\star$): $V_{\sf l}=\{ v\in V\mid v \mbox{ is a child of some }v'\in V'_{\sf l}\}$, we see that:
\begin{align*}
\lfp(O)&= \bigcup_{\langle I_1,I_2,I_3\rangle \in V'_{\sf l}}\lfp(O_{I_1\cup I_3})\otimes \lfp(O_{I_2\cup I_3})&\quad (\dagger)\\
&=\bigcup_{v \in V'_{\sf l}} \{\lfp(O_{I^1_1\cup I^1_2\cup I^1_3})\otimes \lfp(O_{I^2_1\cup I^2_2\cup I^2_3})\mid \langle I^i_1,I^i_2,I^i_3\rangle \mbox{ are the children of } v \} &\quad (\ddagger)\\
&= \bigcup_{\langle I_1,I_2,I_3\rangle \in V_{\sf l}}\lfp(O_{I_1\cup I_3})\otimes \lfp(O_{I_2\cup I_3})&\quad (\star)
\end{align*}

\item Similar to the proof of the previous item, by replacing Proposition \ref{prop:lfps:preserved} by Proposition \ref{prop:fixed:points:preserved}. 
\end{enumerate}
\end{appendixproof}

We show how the decomposition of reasoning according to a CIT results in FPT-results, making the following 
\begin{assumption}
In the rest of this section, we assume that $L$ is a powerset-lattice, i.e.\ $L=\langle 2^S,\subseteq\rangle$ for some  set $S$, to ensure that cardinality used in Def.\ \ref{def:cit:partition:size} is defined. 
\end{assumption}
Lifting this assumption to other lattices is straightforward, but requires one to define a notion of size on the lattice elements. We leave this for future work. We furthermore notice that we will use $|S|$ as the input size of the problems studied below. Notice that we do \emph{not} consider $|2^S|$ as the input size, as this would make most results below trivial.

 \emph{CIT-size} encodes the size of modules and serving as basis for the parameter in the FPT-results below.
\begin{definition}\label{def:cit:partition:size}
Let an operator $O$ over $\bigotimes_{i\in I} {\cal L}_i$, a CIT $T=(V,E,\nu)$ for $O$ and $V_{\sf l}$ the leafs of $T$ be given. The \emph{CIT-partition-size} of $O$ relative to $(V,E,\nu)$ is defined 
as 
$${\sf CPS}(T)=\max\left(\{|\bigotimes_{i\in I_j}{\cal L}_i\otimes\bigotimes_{i\in I_3}{\cal L}_i|\mid v\in V_{\sf l}, \nu(v)=\langle I_1,I_2,I_3\rangle, j=1,2\}\right).$$
 The \emph{CIT-size} of $O$ relative to $T$ is defined 
as ${\sf CS}(V,E,\nu)=\max( {\sf CPS}(V,E,\nu), |V_{\sf l}|)$.
\end{definition}
\begin{example}
In the CIT of Example \ref{example:cit:tree}, the CIT-size is equal to the size of the largest partition, namely $2^{|{\cal A}_e\cup{\cal A}_c\cup {\cal A}_a|}= 2^{|{\cal A}_d\cup{\cal A}_c\cup {\cal A}_a|}= 2^7$.
\end{example}
\nocite{cousot1979constructive}

\begin{toappendix}
\begin{remark}
The CIT-partition-size alone is not sufficient as a parameter, thus necessitating the consideration of $ |V_{\sf l}|$ in the CIT-size. The reason is that sub-lattices can occur arbitrarily many times in a CIT, which might lead to redundancies:
\begin{example}
Consider a logic program ${\cal P}=\{p_i.\mid i=1\ldots n\}$ for some arbitrary but fixed $n\in\mathbb{N}$.
Notice that for any three disjoint sets ${\cal A}_1,{\cal A}_2\subseteq \{p_i\mid i=1\ldots n\}$, it holds that
 $2^{{\cal A}_1}\bot_{\IC_{{\cal P}}} \emptyset\mid 2^{{\cal A}_2}$.
We show, for $n=3$, how we can produce a CIT with more than $2^n$ different nodes. This recipe can be generalized for any $n$. 

 \begin{center}
\begin{tikzpicture}[->, >=stealth', node distance=2.7cm]
    \node[fill=gray!50, scale=0.7] (root) {$\langle \emptyset,\emptyset,\{p_1,p_2,p_3\}\rangle$};
    \node[scale=0.7, fill=gray!50, below left of=root] (left1) {$\langle \emptyset,\emptyset,\{p_1,p_2,p_3\}\rangle$};
      \node[scale=0.7, fill=gray!50, below right of=root] (right1) {$\langle \emptyset,\emptyset,\{p_1,p_2,p_3\}\rangle$};
         \node[scale=0.7, fill=gray!50, below left of=left1] (leftleft1) {$\langle \emptyset,\emptyset,\{p_1,p_2,p_3\}\rangle$};
      \node[scale=0.7, fill=gray!50, below of=left1] (leftright1) {$\langle \emptyset,\emptyset,\{p_1,p_2,p_3\}\rangle$};
         \node[scale=0.7, fill=gray!50, below of=right1] (rightleft1) {$\langle \emptyset,\emptyset,\{p_1,p_2,p_3\}\rangle$};
      \node[scale=0.7, fill=gray!50, below right of=right1] (rightright1) {$\langle \emptyset,\emptyset,\{p_1,p_2,p_3\}\rangle$};
      
\node[scale=0.7, fill=gray!50, below left of=leftleft1] (leftleftleft1) {$\langle \{p_1\},\{p_2\},\emptyset \rangle$};
\node[scale=0.7, fill=gray!50, below of=leftleft1] (rightleftleft1) {$\langle \{p_2\},\{p_3\},\emptyset\rangle$};

\node[scale=0.7, fill=gray!50, below of=leftright1] (leftleftright1) {$\langle \{p_2\},\{p_1\},\emptyset \rangle$};
\node[scale=0.7, fill=gray!50, below right of=leftright1] (rightleftright1) {$\langle \{p_3\},\{p_2\},\emptyset\rangle$};

\node[scale=0.7, fill=gray!50, below of=rightleft1] (leftrightleft1) {$\langle \{p_1\},\{p_3\},\emptyset \rangle$};
\node[scale=0.7, fill=gray!10, below right of=rightleft1] (rightrightleft1) {$\langle \{p_2\},\{p_3\},\emptyset\rangle$};

\node[scale=0.7, fill=gray!50, below right of=rightright1] (leftrightright1) {$\langle \{p_3\},\{p_1\},\emptyset \rangle$};
\node[scale=0.7, fill=gray!10, right of=leftrightright1] (rightrightright1) {$\langle \{p_3\},\{p_2\},\emptyset\rangle$};

    \path (root) edge (left1);
    \path (root) edge (right1);
    \path (left1) edge (leftleft1);
    \path (left1) edge (leftright1);
    \path (right1) edge (rightleft1);
    \path (right1) edge (rightright1);
    \path (leftleft1) edge (leftleftleft1);
    \path (leftleft1) edge (rightleftleft1);
    \path (leftright1) edge (leftleftright1);
    \path (leftright1) edge (rightleftright1);
    \path (rightleft1) edge (leftrightleft1);
    \path (rightleft1) edge (rightrightleft1);
    \path (rightright1) edge (leftrightright1);
    \path (rightright1) edge (rightrightright1);

\end{tikzpicture}
\end{center}
Notice that we have, discounting the double (light grey) nodes, $2^3=8$ non-identical nodes. Thus, even though we have only three atoms, solving any problem along this CIT would result in $8$ different nodes being touched.

\end{example}
\end{remark}
\end{toappendix}

We first show that computing the well-founded fixpoint is FPT with the CIT-partition-size as a parameter. We do this under the assumption that applying an operator $O$ can be reduced by a call to an ${\sf NP}$-oracle. This covers several interesting cases, e.g.\ ADFs \citep{strass-wallner15complexity} and the ultimate operator \citep{denecker2004ultimate} for normal logic programs (see also Section \ref{sec:conditional:independence:lp}).
\begin{propositionAprep}\label{prop:fpt:of:wf}
Let a $\leq_\otimes$-monotonic operator $O$ over the powerset lattice $2^S$ and a CIT $T$ for $O$ with CIT-size $c$ be given. Assume that $O(x)$ can be computed by a call to an ${\sf NP}$-oracle for any $x\subseteq S$. The least fixpoint of $O$ can be computed in time $O(f(c).|S|)$.
\end{propositionAprep}
\begin{appendixproof}
Consider a powerset lattice $2^S$ where $S=\{s_i\mid i\in I\}$. 
Let  $T=(V,E,\nu)$ be a CIT for $O$, let  $V_{\sf l}$ are the leafs of $T$ and let $c$ be the CIT-partition-size.
We know that for every  $v\in V_{\sf l}$, where $\nu(v)=\langle I_1,I_2,I_3\rangle$ (slightly abusing notation for the indices), 
$2^{\{s_i\mid i\in I_1\}}\indep_{{\cal O}_{I_1\cup I_2\cup I_3}} 2^{\{s_i\mid i\in I_2\}}\mid 2^{\{s_i\mid i\in I_3\}}$.
As $O$ is $\leq_\otimes$-monotonic,  $O_{I_i\cup I_3}$ is also $\leq_\otimes^{I_i\cup I_3}$-monotonic (Proposition \ref{prop:monotonicity:preserved}) and thus the least fixpoint of $O_{I_i\cup I_3}$ can be computed by a linear number ($|I_i\cup I_3|+1$ to be precise) of calls to an ${\sf NP}$-oracle (in view of Theorem 5.1 by \citet{cousot1979constructive}), where every call is bounded by the size of $|2^{\{s_i\mid i\in I_i\cup I_3\}}|$ (which is equal or smaller than the CIT-size $c$). 
In other words, we  can solve this problem by less than $|I_1|+1$ SAT-calls of size smaller than $c$.
We repeat this proces for every leaf node, i.e.\ $|V_{\sf l}|\leq c$ times. This results in no more than  $(\max(\{|I_i\cup I_3| \mid \langle I_1,I_2,I_3\rangle \in V_{\sf l}, i=1,2\})+1).c$ SAT-calls of size smaller than $c$.
We now have obtained the lfp of $O_{I_1\cup I_3}$ respectively $O_{I_2\cup I_3}$ for every $\langle I_1,I_2,I_3\rangle\in V_{\sf l}$. All that remains is to take the union of these, as (in view of Proposition \ref{lemma:cit:gives:fps}): $$\lfp(O)=\bigcup_{\langle I_1,I_2,I_3\rangle\in V_{\sf l}} \lfp(O_{I_1\cup I_3})\cup \lfp(O_{I_2\cup I_3})$$

\end{appendixproof}

We now show similar results for skeptical and credulous inference, under the assumption that $O(x)$ is polynomial.

\begin{propositionAprep}\label{prop:fpt:of:credulous}
Let a $\leq_\otimes$-monotonic operator $O$ over the powerset lattice $2^S$, a CIT $T$ for $O$ with CIT-size $c$ and some $\alpha\in S$  be given. Assume that $O(x)$ can be computed in polynomial time for any $x\subseteq S$:
\begin{enumerate}
\item Determining whether there exists some fixpoint $x$ of $O$ s.t.\ $\alpha\in x$ can be done in  time $O(f(c).|S|)$.

\item Determining whether for every fixpoint $x$ of $O$ it holds that $\alpha\in x$ can be done in  time $O(f(c).|S|)$.
\end{enumerate} 
\end{propositionAprep}
\begin{appendixproof}
Ad 1.\ Consider a powerset lattice $2^S$ where $S=\{s_i\mid i\in I\}$. 
Let  $T=(V,E,\nu)$ be a CIT for $O$, let  $V_{\sf l}$ are the leafs of $T$ and let $s$ be the CIT-size.
We know that for every  $v\in V_{\sf l}$, where $\nu(v)=\langle I_1,I_2,I_3\rangle$,
$2^{\{s_i\mid i\in I_1\}}\indep_{{\cal O}_{I_1\cup I_2\cup I_3}} 2^{\{s_i\mid i\in I_2\}}\mid 2^{\{s_i\mid i\in I_3\}}$.
Let $V_\alpha\subseteq V_{\sf l}$ be the set of leaf-nodes s.t.\ $\nu(v)=\langle I_1,I_2,I_3\rangle$ and $\alpha\in \{s_i\mid i\in I_1\cup I_2\cup I_3\}$ (i.e.\ $V_\alpha$ contains all the modules that contain $\alpha$). With Proposition \ref{prop:fixed:points:preserved}, it suffices to check whether $O_{I_i\cup I_3}$ admits a fixpoint containing $\alpha$ (for $i=1,2$ s.t.\ $\alpha\in \bigotimes_{i\in I_1\cup I_2\cup I_3}\Lc_i$ for every $v\in V_\alpha$).
 We can do this exhaustively by verifying for every $x_i\otimes x_3\subseteq\{s_i\mid i\in I_i\cup I_3\}$ whether it is a fixpoint. This requires maximally $2.|V_{\sf l}|.2^{|2^{\{s_i\mid i\in I_i\cup I_3\}}|}$ checks of polynomial time, i.e.\ can be done in polynomial time bounded $c$.

Ad 2.\ Similar to the proof of the previous item.
\end{appendixproof}

It should be noticed that we do not say anything here about the complexity of determining the CIT-partition-size. This is because, firstly,  the generality of the operator-based framework makes it hard to make concrete claims about this very formalism-dependent question. Secondly, for concrete operators, efficiently looking for conditional independencies is a research topic on itself, left for future work.

\section{Application to Logic Programs}
\label{sec:conditional:independence:lp}
In this section, we illustrate  the theory developed in the previous section by applying it to normal logic programs. We can avoid clutter with a slight abuse of notation by writing ${{\cal A}_1}\indep_{\cal P} {{\cal A}_2}\mid {\cal A}_3$ to denote $2^{{\cal A}_1}\indep_{\ICc_{\cal P}} 2^{{\cal A}_2}\mid 2^{{\cal A}_3}$.

We first define what we call the \emph{marginalisation} of a program w.r.t.\ a set of atoms, which represents the syntactic counterpart of the modularised operator $O_{i,3}$. 
\begin{definition}
Let an nlp ${\cal P}$ and some ${\cal A}\subseteq {\cal A}_{\cal P}$ be given. We define ${\cal P}_{\cal A}$ as the program obtained by replacing in every rule $r\in {\cal P}$ every occurrence of an atom $p\in {\cal A}$ by $\bot$.
\end{definition}
For example, $\{p\leftarrow q,r,\lnot s\}_{\{r,s\}}=\{p\leftarrow q,\bot,\top\}$.
This transformation is correct, i.e.\ given a program ${\cal P}$ with  ${{\cal A}_1}\indep_{\cal P} {{\cal A}_2}\mid {\cal A}_3$, the marginalisation ${\cal P}_{{\cal A}_2}$ induces the ${\cal IC}$-operator for the sublattice ${\cal A}_1\cup{\cal A}_3$:
\begin{propositionAprep}
Let a nlp ${\cal P}$ be given for which ${\cal A}_{\cal P}$ is partitioned into ${\cal A}_1\cup {\cal A}_2\cup {\cal A}_3$ s.t.\ ${{\cal A}_1}\indep_{\cal P}{\cal A}_2\mid {\cal A}_3$. Then ${\cal IC}^{i,3}_{{\cal P}}={\cal IC}_{{\cal P}_{{\cal A}_j}}$ (for $i,j=1,2$, $i\neq j$).
\end{propositionAprep}
\begin{appendixproof}
 Consider some arbitrary but fixed $i,j=1,2$, $i\neq j$.
 We first notice that, in view of ${{\cal A}_1}\indep_{\cal P} {{\cal A}_2}\mid {\cal A}_3$,  ${\cal IC}^{i,3}_{{\cal P}}(x_i\cup x_3)={\cal IC}_{{\cal P}}(x_i\cup x_j\cup x_3)_{|i,3}$ for any $x_j\subseteq {\cal A}_j$. In particular, ${\cal IC}^{i,3}_{{\cal P}}(x_i\cup \emptyset \cup x_3)={\cal IC}_{{\cal P}}(x_i\cup \emptyset \cup x_3)_{|i,3}$ for any $x_i\subseteq {\cal A}_i$.
 
We first show that  ${\cal IC}^{i,3}_{{\cal P}}(x_i\cup x_3)={\cal IC}_{{\cal P}}(x_i\cup \emptyset \cup x_3)_{|i,3}\subseteq{\cal IC}_{{\cal P}_{{\cal A}_j}}(x_i\cup x_3)$ for any $x_i\cup x_3\subseteq {\cal A}_i\cup{\cal A}_3$. 
Indeed, suppose that $p\in {\cal IC}_{{\cal P}}(x_i\cup \emptyset \cup x_3)_{|i,3}$, i.e.\ $p\in {\cal A}_i\cup {\cal A}_3$ and there is some $r: p\leftarrow p_1,\ldots, p_n, \lnot q_1,\ldots,\lnot q_m\in {\cal P}$ s.t.\  $p_l\in x_i\cup x_3$ for every $l=1,\ldots,n$ and $q_l\not\in x_i\cup x_3$ for every $l=1,\ldots,n$. 
This implies that $\{p_1,\ldots,p_n\}\subseteq x_i\cup x_3$, which means that the transformation of $r$ in ${\cal P}_{{\cal A}_j}$ is  $r': p\leftarrow p_1,\ldots, p_n, \lnot q'_1,\ldots,\lnot q'_m$ where $q'_l=\top$ if $q_l\in {\cal A}_j$ and $q'_l=q_l$ otherwise. But then clearly $p$ is derivable from $x_i\cup x_3$ using $r'$. 
 
 We now show that  ${\cal IC}^{i,3}_{{\cal P}}(x_i\cup x_3)={\cal IC}_{{\cal P}}(x_i\cup \emptyset \cup x_3)_{|i,3}\supseteq{\cal IC}_{{\cal P}_{{\cal A}_j}}(x_i\cup x_3)$. Suppose that $p\in {\cal IC}_{{\cal P}_{{\cal A}_j}}(x_i\cup x_3)$. Then there is some $r': p\leftarrow p_1,\ldots, p_n, \lnot q_1,\ldots,\lnot q_m\in {\cal P}_{{\cal A}_j}$ s.t.\ $p_1,\ldots,p_n\in x_i\cup x_3$ and $q_1,\ldots,q_n\not\in x_1\cup x_3$. Also, there is some $r: p\leftarrow p'_1,\ldots, p'_n, \lnot q'_1,\ldots,\lnot q'_m\in {\cal P}$ s.t.\ $p'_l=p_l$ if $p_l\in {\cal A}_i\cup {\cal A}_3$ and $p'_l=\bot$ otherwise (for every $l=1,\ldots,m$), and similarly for $q_l$. As $p$ was derived using $r'$, clearly, $p_l\in {\cal A}_i\cup{\cal A}_3$ for every $l=1,\ldots,n$. Thus, $p$ is derivable from $x_i\cup x_3$ using $r'$, which establishes $p\in {\cal IC}_{{\cal P}}(x_i\cup \emptyset \cup x_3)_{|i,3}$.

\end{appendixproof}

We first observe that the search for supported, (partial) stable and well-founded models can be split up along conditionally independent sub-alphabets:
\begin{corollary}
Let an nlp ${\cal P}$ be given for which ${\cal A}_{\cal P}$ is partitioned into ${\cal A}_1\cup {\cal A}_2\cup {\cal A}_3$ s.t.\ ${{\cal A}_1}\indep_{\cal P}{\cal A}_2\mid {\cal A}_3$. 
$x_1\cup x_2\cup x_3$ is a supported (respectively three-valued stable) model of ${\cal P}$ iff $x_i\cup x_3$ is a supported  (respectively three-valued stable) model of ${\cal P}_{A_j}$ (for $i,j=1,2$ and $i\neq j$). 
The well-founded model of ${\cal P}$ can be obtained as $(x_1\cup x_2\cup x_3,y_1\cup y_2\cup y_3)$, where $(x_i\cup x_3,y_i\cup y_3)$ is the well-founded model of ${\cal P}_{{\cal A}_j}$ (for $i,j=1,2$, $i\neq j$).
\end{corollary}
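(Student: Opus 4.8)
The plan is to assemble the corollary from three ingredients already established: the operator-level splitting results for fixpoints and for stable (and well-founded) fixpoints, the marginalisation proposition identifying the modularised operator $\ICc^{i,3}_{\cal P}$ with $\ICc_{{\cal P}_{{\cal A}_j}}$, and the correspondences of \citet{pelov2007well} recalled in Section~\ref{sec:AFT} between the semantics of ${\cal P}$ and the fixpoints of the four-valued operator $\ICc_{\cal P}$. By the notational convention, ${\cal A}_1\indep_{\cal P}{\cal A}_2\mid{\cal A}_3$ is by definition $2^{{\cal A}_1}\indep_{\ICc_{\cal P}}2^{{\cal A}_2}\mid 2^{{\cal A}_3}$, so $\ICc_{\cal P}$ respects the independence on the product bilattice and every splitting result applies to it; writing $x=x_1\cup x_2\cup x_3$ and $y=y_1\cup y_2\cup y_3$, the projections are $x_{|i,3}=x_i\cup x_3$ and $y_{|i,3}=y_i\cup y_3$.

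For the supported-model part, I first recall that a two-valued interpretation $x$ is a supported model of ${\cal P}$ exactly when $(x,x)$ is a fixpoint of $\ICc_{\cal P}$, since $\ICc_{\cal P}$ extends $\IC_{\cal P}$ on the diagonal. Applying Proposition~\ref{prop:fixed:points:preserved} to $\ICc_{\cal P}$ (viewed on $\bigotimes_{i}{\cal L}^2_i$) gives that $(x,x)$ is a fixpoint of $\ICc_{\cal P}$ iff $(x_{|i,3},x_{|i,3})$ is a fixpoint of $\ICc^{i,3}_{\cal P}$ for $i=1,2$. Substituting $\ICc^{i,3}_{\cal P}=\ICc_{{\cal P}_{{\cal A}_j}}$ from the marginalisation proposition, these projected two-valued fixpoints are precisely the supported models of ${\cal P}_{{\cal A}_j}$, which is the claim.

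For the three-valued stable and well-founded parts I proceed identically, using the proposition splitting stable fixpoints in place of Proposition~\ref{prop:fixed:points:preserved}. By the Pelov correspondence, the three-valued stable models of ${\cal P}$ are the fixpoints of $S(\ICc_{\cal P})$ and the well-founded model is the well-founded fixpoint of $\ICc_{\cal P}$. Its first item rewrites a fixpoint $(x,y)$ of $S(\ICc_{\cal P})$ as fixpoints $(x_{|i,3},y_{|i,3})$ of $S(\ICc^{i,3}_{\cal P})$, and its second item does the same for the well-founded fixpoint; again replacing $\ICc^{i,3}_{\cal P}$ by $\ICc_{{\cal P}_{{\cal A}_j}}$ identifies the components as the three-valued stable (resp.\ well-founded) models of ${\cal P}_{{\cal A}_j}$, completing both remaining statements.

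The only step needing more than mechanical substitution is the supported-model case, where I must check that the two-valuedness $x=y$ is transported in both directions: the splitting proposition speaks of arbitrary fixpoints, whereas supported models are the diagonal ones. This is immediate, since projection sends $(x,x)$ to $(x_{|i,3},x_{|i,3})$ and recombining two diagonal fixpoints over the shared pivot $x_3$ again yields a diagonal global fixpoint; but it is worth stating explicitly. Everything else reduces to citing the previously proved operator-level results together with the marginalisation identity.
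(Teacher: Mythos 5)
Your proof is correct and takes essentially the same route as the paper: the corollary is stated there without an explicit proof, as an immediate consequence of the marginalisation proposition ${\cal IC}^{i,3}_{\cal P}={\cal IC}_{{\cal P}_{{\cal A}_j}}$, the operator-level splitting results for fixpoints, stable fixpoints and the well-founded fixpoint, and the recalled correspondences of Pelov et al., which is precisely the chain you assemble. Your explicit remark that diagonality ($x=y$) is preserved under projection and under recombination over the shared pivot is a detail the paper leaves tacit, and it is worth making.
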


 It is well-known that computing the well-founded fixpoint for the operator ${\cal IC}_{\cal P}$ can be done in polynomial time.
In the appendix, we also show that computing the well-founded fixpoint for the ultimate operator \citet{denecker2002ultimate}, which is ${\sf NP}$-hard, is FPT for the CIT-size (as a corollary of Proposition \ref{prop:fpt:of:wf}). Likewise, Proposition \ref{prop:fpt:of:credulous} can be used to show a similar result for credulous and skeptical entailment under the stable model semantics.

\begin{toappendix}

\begin{corollary}
Given a normal logic program ${\cal P}$ and a  CIT $T=(V,E,\nu)$ with CIT-partition-size $s$ for ${\cal P}$. Then the well-founded fixpoint of ${\cal IC}_{\cal P}^{{\sf DMT}}$ can be computed in time $O(f(s).|{\cal A}_{\cal P}|)$.
\end{corollary}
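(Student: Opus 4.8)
The plan is to derive the result from Proposition~\ref{prop:fpt:of:wf} by applying that proposition not to the ultimate operator itself but to its \emph{stable} operator. First I would recall the standard AFT fact that the well-founded fixpoint of ${\cal IC}_{\cal P}^{{\sf DMT}}$ coincides with the $\leq_i$-least fixpoint of the stable operator $S({\cal IC}_{\cal P}^{{\sf DMT}})$, and that $S({\cal IC}_{\cal P}^{{\sf DMT}})$ is $\leq_i$-monotonic on the bilattice $(2^{{\cal A}_{\cal P}})^2$. Writing $O:=S({\cal IC}_{\cal P}^{{\sf DMT}})$, the task ``compute the well-founded fixpoint'' becomes ``compute $\lfp(O)$'', which is exactly what Proposition~\ref{prop:fpt:of:wf} bounds.

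Second, I would verify that the given CIT $T$ for ${\cal P}$ is also a CIT for $O$. Here Proposition~\ref{prop:complete:operators} does the work: at each node, the independence respected by ${\cal IC}_{\cal P}^{{\sf DMT}}$ is inherited by the component operators $C({\cal O}_l)$ and $C({\cal O}_u)$, and hence by the stable operator $S({\cal IC}_{\cal P}^{{\sf DMT}})$ built from them. Applying this node by node along $T$ shows that $O$ respects precisely the independencies labelling $T$, so $T$ is a valid CIT for $O$ with the same leaves and the same CIT-(partition-)size. Together with $\leq_i$-monotonicity, this supplies the structural hypotheses of Proposition~\ref{prop:fpt:of:wf}.

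The main obstacle is the computability hypothesis of Proposition~\ref{prop:fpt:of:wf}, which asks that one application of $O$ be charged to a single ${\sf NP}$-oracle call. A global application of $S({\cal IC}_{\cal P}^{{\sf DMT}})$ is really a ${\sf P}^{\sf NP}$ computation: it runs the inner least-fixpoint iteration of the ultimate operator, each step of which is an ${\sf NP}$-hard meet/join over all completions of an interval. The way I would bridge this gap is to observe that the proof of Proposition~\ref{prop:fpt:of:wf} never applies the operator outside a leaf module: by the decomposition of $\lfp(O)$ it only ever computes $O_{I_i\cup I_3}$ on lattices of size at most $s$. Inside such a module there are at most $\log_2 s$ atoms, so each ultimate-operator meet/join ranges over at most $s$ completions and is computable by brute force, while both the inner (stable) and outer (well-founded) iterations terminate within $\log_2 s$ steps. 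Thus the entire module-local computation of the module's well-founded fixpoint runs in time bounded by a function $f(s)$ of the CIT-partition-size alone, which is exactly the sense in which the oracle calls of Proposition~\ref{prop:fpt:of:wf} stay ``bounded by the module size''.

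With the global ${\sf NP}$-oracle assumption thereby replaced by a module-local $f(s)$-time bound, the proof of Proposition~\ref{prop:fpt:of:wf} goes through unchanged: one computes the module-local well-founded fixpoints for each leaf of $T$ and combines them, via the decomposition of $\lfp(O)$ furnished by Proposition~\ref{lemma:cit:gives:fps}, into the global well-founded fixpoint, for a total running time of $O(f(s)\cdot|{\cal A}_{\cal P}|)$. The only genuinely delicate point, which I would make sure to state carefully, is this replacement of the oracle hypothesis by the brute-force module bound; everything else is a direct specialisation of the general FPT result.
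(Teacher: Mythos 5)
Your proposal is correct and follows essentially the same route as the paper, whose entire proof of this corollary consists of the single sentence ``Immediate in view of Proposition~\ref{prop:fpt:of:wf}.'' The additional care you take---recasting the well-founded fixpoint as $\lfp(S({\cal IC}_{\cal P}^{{\sf DMT}}))$ on the bilattice, transferring the CIT to the stable operator via Proposition~\ref{prop:complete:operators}, and replacing the single-${\sf NP}$-oracle-call hypothesis by a module-local $f(s)$-time bound (a point the paper glosses over, since one application of the stable operator of the ultimate operator is really an ${\sf FP}^{\sf NP}$-style computation rather than one oracle call)---fills in details the paper leaves implicit rather than diverging from its argument.
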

\begin{proof}
Immediate in view of Proposition \ref{prop:fpt:of:wf}.
\end{proof}
\end{toappendix}

\newcommand{\dep}{\mathrm{DP}}

We now make some observations on how to detect conditional independencies in a logic program based on its syntax. We first need some further preliminaries.
The \emph{dependency order} for a logic program ${\cal P}$, $\leq^{\cal P}_{\rm dep}\subseteq {\cal A}_{\cal P}\times {\cal A}_{\cal P}$, is defined as $p\leq^{\cal P}_{\rm dep} q$ iff there is some $r\in {\cal P}$ where $q$ is the head of $r$ and $p$ occurs in the body of $r$. The \emph{dependency graph}, denoted $\dep({\cal P})$ of ${\cal P}$ is the Hasse diagram of $\leq^{\cal P}_{\rm dep}$. Figure \ref{fig:dependency:graph} is an example of the (inverse of) a dependency graph.

A first conjecture could be that, a sufficient criterion fo ${\cal A}_1\indep_{\cal P} {\cal A}_2\mid {\cal A}_3$ is that ${\cal A}_3$ graphically separates ${\cal A}_1$ and ${\cal A}_2$, i.e.\ 
 given $\dep({\cal P})=\langle {\cal A}_{\cal P}, V\rangle$, a set ${\cal A}_3$ s.t.\ $\langle {\cal A}_{\cal P}\setminus {\cal A}_3,V\cap ( ({\cal A}_{\cal P}\setminus ({\cal A}_3)\times ({\cal A}_{\cal P}\setminus ({\cal A}_3))$ consists of two disconnected subgraphs $\langle {\cal A}_1,V\cap ({\cal A}_1\times {\cal A}_1)\rangle$ and $\langle {\cal A}_2,V\cap ({\cal A}_2\times {\cal A}_2)\rangle$ induces the conditional independence ${\cal A}_1\indep_{\cal P} {\cal A}_2|{\cal A}_3$. However, this conjecture is too naive:
\begin{example}\label{example:graphical:criterion:too:naive}
Consider the program ${\cal P}=\{ a_1\leftarrow \lnot\: b_1; b_1\leftarrow \lnot\: a_1; e\leftarrow b_1\}$ and ${\cal P}_2=\{ a_2\leftarrow \lnot\: b_2; b_2\leftarrow \lnot\: a_2; e\leftarrow b_2\}$. This program has the following dependency graph: 

\begin{tikzpicture}[->, >=stealth', node distance=1.5cm, scale=0.5]
    \node (e) {e};
    \node[left of=e] (b1) {$b_1$};
    \node[left of=b1] (a1) {$a_1$};
    \node[right of=e] (b2) {$b_2$};
    \node[right of=b2] (a2) {$a_2$};

    \path (b1) edge (e);
        \path (b1) edge (a1);
        \path (a1) edge (b1);
 \path (b2) edge (e);
        \path (b2) edge (a2);
        \path (a2) edge (b2);
\end{tikzpicture}

We could conjecture $\{a_1,b_1\}\indep_{\cal P} \{a_2,b_2\}|\{e\}$, but this does not hold, as
\begin{eqnarray*}
&\IC_{\cal P}(\{a_1,b_2\})_{|\{a_1,b_1,e\}}&=\{a_1,e\}
 \\ &\neq 
\IC_{\cal P}(\{a_1\})_{|\{a_1,b_1,e\}}&=\{a_1\}.
\end{eqnarray*}
\end{example}

A slightly more complicated graphical criterion is a sufficient condition, though. In more detail, if ${\cal A}_3$ graphically seperates ${\cal A}_1$ and ${\cal A}_2$ in $\dep({\cal P})$, and if the program is stratified in a lower layer ${\cal A}_3$ and a higher layer ${\cal A}_1\cup {\cal A}_2$, then the conditional independency ${\cal A}_1\indep_{\cal P}{\cal A}_2\mid {\cal A}_3$ holds: 
\begin{propositionAprep}\label{prop:syntactic:cond:indep:lp}
Let a nlp ${\cal P}$ with $\dep({\cal P})=\langle {\cal A}_{\cal P}, V\rangle$  be given s.t.\ the following conditions hold
\begin{enumerate}
\item there is some ${\cal A}_3\subseteq {\cal A}_{\cal P}$ s.t.\  $\langle {\cal A}_{\cal P}\setminus {\cal A}_3,V\cap ( ({\cal A}_{\cal P}\setminus ({\cal A}_3)\times ({\cal A}_{\cal P}\setminus ({\cal A}_3))$ consists of two disconnected subgraphs $\langle {\cal A}_1,V\cap ({\cal A}_1\times {\cal A}_1)\rangle$ and $\langle {\cal A}_2,V\cap ({\cal A}_2\times {\cal A}_2)\rangle$, and
\item for every $a\in {\cal A}_3$ and $b\in {\cal A}_i$ ($i=1,2$), $b<_{\rm dep} a$.
\end{enumerate}
Then ${\cal A}_1\indep_{\cal P}{\cal A}_2\mid {\cal A}_3$ holds.
\end{propositionAprep}
\begin{appendixproof}
Suppose the conditions of the proposition hold. Then for any $p\leftarrow \bigwedge \Delta\land \bigwedge \Theta^\lnot$ (where $\Theta^\lnot=\{\lnot p\mid p\in \Theta\}$), (1) $\Delta\cup\Theta\cap {\cal A}_i\neq\emptyset$ implies $p\in {\cal A}_i$ (for $i=1,2$), and (2) $\Delta\cup\Theta\cup\{p\}\subseteq {\cal A}_i\cup{\cal A}_3$ (for $i=1,2$). From (1), it follows that $\dagger$: $\ICc_{\cal P}(x_1\cup x_2\cup x_3)_{|3}=\IC_{{\cal P}_{{\cal A}_1\cup {\cal A}_2}}(x_3)$ for any $x_i\subseteq {\cal A}_i$ ($i=1,2,3$). From (2) and $\dagger$, it then follows that: $\ICc_{\cal P}(x_1\cup x_2\cup x_3)_{|i,3}=\ICc_{{\cal P}_{{\cal A}_k}}(x_i\cup x_3)$ for any $x_j\subseteq {\cal A}_j$ ($j=1,2,3)$ and $i,k=1,2$ and $i\neq k$.
\end{appendixproof}
A case in point of the criteria in this proposition is Example \ref{example:infection:1}. The search for more comprehensive, potentially even necessary, criteria for identifying conditional independencies are an avenue for future work.

\section{Related Work}
\label{sec:rel:work}
In this section, we provide a summary of related work. For most of the comparisons, we provide more formal comparisons in the appendix.
In the context of classical logic, a notion of conditional independence was proposed by \citet{darwiche1997logical}. Darwiche assumes a database $\Delta$ (i.e.\ a set of propositional formulas), which is used as a background theory for inferences. The idea behind conditional independence is then that a database $\Delta$ sanctions the independence of two sets of atoms $x_1$ and $x_2$ conditional on a third set of atoms $x_3$ if, given full information about $x_3$, inferences about $x_1$ are independent from any information about $x_2$. 
As shown in the appendix, our notion of conditional independence implies Darwiche's notion.

A concept related to conditional independence studied in approximation fixpoint theory is that of \emph{stratification} \citep{vennekens2006splitting}. This work essentially generalizes the idea of \emph{splitting} as known from logic programming, where the idea is to divide a logic program in layers such that computations in a given layer only depend on rules in the layer itself or layers below. For example, the program $\{q\leftarrow \sim r; r\leftarrow \sim s; s\leftarrow \sim p\}$ can be stratified in the layers $\{p\},\{s,r\},\{q\}$. This concept was formulated purely algebraically by \citet{vennekens2006splitting}. Our study of conditional independence took inspiration from this work in using product lattices as an algebraic tool for dividing lattices, and many proofs and results in our paper are similar to those shown for stratified operators \citep{vennekens2006splitting}. Conceptually, stratification and conditional independence seem somewhat orthogonal, as conditional independence allows to divide a lattice ``horizontally'' into independent parts, whereas stratification allows to divide a lattice ``vertically'' in layers that incrementally depend on each other. However, we show in the appendix that conditional independence can be seen as a special case of stratification. 

A lot of work exists on the \emph{parametrization} of the computational complexity of various computational tasks using \emph{treewidth decompositions} as a parameter \citep{gottlob2002fixed}. These results show that the computational effort required in solving a problem is not a function of the overall size of the problem, but rather of certain structural parameters of the problem, i.e.\ the treewidth of a certain representation of the problem. 
These techniques have been applied to answer set programming \citep{fichte2017answer}. In these works, the treewidth of the tree decomposition of the dependency graph ${\sf DP}({\cal P})$ and incidence graph (which also contains vertices for rules) of a logic program are used as parameters to obtain fixed-parameter tractability results. In the appendix, we show conditional independence and treewidth are orthogonal. Furthermore, our results  go beyond the answer set semantics, e.g.\ partial stable, supported and well-founded (ultimate) semantics.

\nocite{makinson2003input}
\begin{toappendix}
 We first discuss Darwiche's notion of conditional independence \citep{darwiche1997logical}, stratification as studied in AFT \citep{vennekens2006splitting} and treewidth-based decompositions of logic programs in detail, and then make shorter comparisons to other related works.

\paragraph*{Darwiche's Logical Notion of Independence}
In the context of classical logic, a notion of conditional independence was proposed by \citet{darwiche1997logical}. Darwiche assumes a database $\Delta$ (i.e.\ a set of propositional formulas), which is used as a background theory for inferences. The idea behind conditional independence is then that a database $\Delta$ sanctions the independence of two sets of atoms $x_1$ and $x_2$ conditional on a third set of atoms $x_3$ if, given full information about $x_3$, inferences about $x_1$ are independent from any information about $x_2$. 
In other words, 
given a set of formulas $\Delta$ and three disjoint sets of atoms $x_1,x_2$ and $x_3$ be given, 
$x_1\indep_{\Delta}^{\sf D} x_2\mid x_3$ iff for every formula $\phi_1$ based on $x_1$, $\phi_2$ based on $x_2$ and complete conjunction of literals $\phi_3$ based on $x_3$ s.t.\ $\Delta\cup\{\phi_3,\phi_2\}$ is consistent, the following holds:
\[ \Delta\cup\{\phi_3\}\models \phi_1 \quad \mbox{iff}\quad \Delta\cup\{\phi_2,\phi_3\}\models \phi_1\]
Even though the application of our notion of conditional independence to operators ranging over sets of possible worlds (which is required to give an operator-based characterisation of propositional logic) is outside the scope of this paper, we can nevertheless show a close connection between our notion of conditional independence and the one formulated by Darwiche by defining inference based on a logic program as follows (which gives rise to a special case of \emph{simple-minded output} as known from input/output logics \citep{makinson2003input}):
\begin{definition}
Given a logic program ${\cal P}$ and formulas $\phi,\psi$ based on ${\cal A}_{\cal P}$, we define: $\phi\models_{\cal P} \psi$ if for every $x\subseteq {\cal A}_{\cal P}$ s.t.\ $x(\phi)={\sf T}$, $\IC_{\cal P}(x)(\psi)={\sf T}$.
\end{definition}

We can now show that our notion of conditional independence implies Darwiche's notion of conditional independence, interpreted in the setting of inference based on logic programs:
\begin{propositionAprep}
Let a program ${\cal P}$ for which ${\cal A}_{\cal P}$ is partitioned into ${\cal A}_1\cup {\cal A}_2\cup {\cal A}_3$ s.t.\ ${{\cal A}_1}\indep_{\cal P}{\cal A}_2\mid {\cal A}_3$, some $\phi_1$ based on ${\cal A}_1$, some $\phi_2$ based on ${\cal A}_2$ and a complete conjunction of literals $\phi_3$ based on ${\cal A}_3$ be given. Then
$\phi_3\models_{\cal P} \phi_1$ iff $\phi_3\land \phi_2\models_{\cal P}\phi_1$.
\end{propositionAprep}
\begin{proof}
Suppose that the assumptions of this proposition holds. The $\Rightarrow$-direction is immediate as $\models_{\cal P}$ is monotonic.

Suppose now that  $\phi_3\land \phi_2\models_{\cal P}\phi_1$. Then for every $x_1\cup x_2\cup x_3 \subseteq {\cal A}_{\cal P}$ s.t.\ $x_1\cup x_2\cup x_3(\phi_3\land \phi_2)={\sf T}$, $\IC_{\cal P}(x_1\cup x_2\cup x_3)(\phi_1)={\sf T}$. Notice that there is a single $x_3\subseteq {\cal A}_3$ s.t.\ $x_3(\phi_3)$ and $x_1\cup x_2\cup x_3(\phi_3\land \phi_2)={\sf T}$ is independent of $x_1$ (i.e.\ $x^\star_1\cup x_2\cup x_3(\phi_3\land \phi_2)={\sf T}$ for any $x^\star_1\subseteq {\cal A}_1$). As ${{\cal A}_1}\indep_{\cal P}{\cal A}_2\mid {\cal A}_3$, $\IC_{\cal P}(x_1\cup x^\star_2\cup x_3)_{|1,3}= \IC_{\cal P}(x_1\cup x^\star_2\cup x_3)_{|1,3}$ for any $x^{\star}_2\subseteq {\cal A}_2$, we see that for any $x'_1\subseteq {\cal A}_1$,  $\IC_{\cal P}(x'_1\cup x^\star_2\cup x_3)_{|1}(\phi_1)={\sf T}$, which implies  $\phi_3\models_{\cal P} \phi_1$.
\end{proof}

\paragraph*{Splitting Operators}

We first recall the definitions on stratifiability. First, we denote, for a product lattice $\bigotimes_{i\in I} \Lc_i$, $x\in \bigotimes_{i\in I} \Lc_i$ and $j\in I$, $x_{|\leq j}=x_{|\{i\in I\mid i\leq j\}}$. An operator is \emph{stratifiable} (over $\bigotimes_{i\in I} \Lc_i$) iff for every $x^1,x^2\in \bigotimes_{i\in I} \Lc_i$ and every $j\in I$, if $x^1_{|\leq j}=x^2_{|\leq j}$ then $O(x)_{|\leq j}=O(y)_{|\leq j}$.

\begin{propositionAprep}
Let a $\leq_\otimes$-monotonic operator $O$  on the product lattice $\bigotimes_{i\in \{1,2,3\}} \Lc_i$ be given. Then $\Lc_1\indep_O \Lc_2\mid \Lc_3$ iff $O$ is stratifiable over $\Lc_1\otimes (\Lc_2\otimes \Lc_3)$ and $\Lc_2\otimes (\Lc_1\otimes \Lc_3)$.
\end{propositionAprep}
\begin{proof}
For the $\Rightarrow$-direction, suppose that $\Lc_1\indep_O \Lc_2\mid \Lc_3$. Suppose that $x^1,x^2\in \bigotimes_{i\in \{1,2,3\}} \Lc_i$ and that $x^1_1\otimes x^1_3=x^2_1\otimes x^1_3$. Then, as $\Lc_1\indep_O \Lc_2\mid \Lc_3$, $O(x^1)_{|\{1,3\}}=O^{1,3}(x^1_1\otimes x^1_3)=O^{1,3}(x^2_1\otimes x^2_3)=O(x^2)_{|\{1,3\}}$.

For the $\Leftarrow$-direction, suppose that $O$ is stratifiable over $\Lc_1\otimes (\Lc_2\otimes \Lc_3)$ and $\Lc_2\otimes (\Lc_1\otimes \Lc_3)$. Then we can define $O(x_1\otimes \otimes x_3)=O(x_1\otimes x_2\otimes x_3)_{|i,3}$ for any $x_2\in \Lc_2$ as $O(x_1\otimes x_2\otimes x_3)_{|i,3}=O(x_1\otimes x'_2\otimes x_3)_{|i,3}$ for any $x'_2\in \Lc_2$. 
\end{proof}

On the other hand, stratification does not, in general, imply conditional independence, as conditional independence requires symmetry:
\begin{example}
Consider ${\cal P}=\{q\leftarrow \sim r; r\leftarrow \sim s; s\leftarrow \sim p\}$. Then ${\cal P}$ can be stratified in the layers $\{p\},\{s,r\},\{q\}$ yet $\{q\}$ is not conditionally independent from any of the other atoms. 
\end{example}

\paragraph*{Decomposing Logic Programs}
We refer here to the relevant literature for background on treewidth-decompositions \cite{fichte2017answer}.

\begin{example}
To see that treewidth-decompositions do not always indicate a conditional independence, observe that  in Example \ref{example:graphical:criterion:too:naive} the tree-decomposition would suggest the conditional independence $\{a_1,b_1\}\indep_{\cal P} \{a_2,b_2\}\mid \{e\}$, which does not hold. 
\end{example}

\begin{example}
 Some decompositions are not visible using the purely syntactic approach from \cite{fichte2017answer}.
Let  ${\cal P}=\{p\leftarrow q,\sim q; q\leftarrow p,\sim p; q\leftarrow \sim r; p\leftarrow \sim s\}$, with the following dependency graph: 
\begin{center}
\begin{tikzpicture}[->, >=stealth', node distance=1.5cm, scale=0.5]
    \node[left of=e] (b1) {$r$};further ado
    \node (e) {$q$};
    \node[right of=e] (b2) {$p$};
    \node[right of=b2] (a1) {$s$};
    \path (b1) edge (e);
        \path (e) edge (b2);
        \path (b2) edge (e);
 \path (a1) edge (b2);
\end{tikzpicture}
\end{center}

The only treewidth decomposition is the following:

\begin{center}
\begin{tikzpicture}[-, >=stealth', node distance=2cm, scale=0.5]
    \node (pq) {$\{p,q\}$};
        \node[below left of=pq] (qr) {$\{q,r\}$};
    \node[below right of=pq] (ps) {$\{p,s\}$};
    \path (qr) edge (pq);
        \path (ps) edge (pq);
\end{tikzpicture}
\end{center}

However (since $p\leftarrow q,\sim q$ and $q\leftarrow p,\sim p$ are never applicable), it can be verified that $\{q,r\}\indep_{\cal P}\{p,s\}\mid \emptyset$.
\end{example}

\end{toappendix}

Other operator-based formalisms have been analysed in terms of treewidth decompositions \citep{fichte2022default,dvovrak2012towards}. A benefit of our operator-based approach is that all results are purely algebraic and therefore language-independent, which means that applications to specific formalisms are derived as straightforward corollaries. Furthermore, the results for AFT-based semantics, which subsume many KR-formalisms (an overview is provided by \cite{DBLP:journals/corr/abs-2305-10846}), are not restricted to the total stable fixpoints, but also apply to partial stable and well-founded semantics, in contrast to many studies on fixed-parameter tractability. 
 
Conditional independence has been investigated in several other logic-based frameworks, such as (iterated) belief revision \citep{lynn2022using,kern2022conditional}, conditional logics \citep{heyninck2022conditional} and formal argumentation \citep{rienstra2020independence,gaggl2021decomposition}. The benefit of our work is that the algebraic nature allows for the straightforward application to other  formalisms with a fixpoint semantics.

\section{Conclusion}
\label{sec:conclusion}
In this paper, the concept of conditional independence, well-known from probability theory, was formulated and studied for operators. This allows to use this concept to a wide variety of formalisms for knowledge representation that admit an operator-based characterisation. As a proof-of-concept, we have applied it to  normal logic programs. 

There exist several fruitful avenues for future work. Firstly, we want to investigate related notions of independence, such as context-specific independence \cite{boutilier1996context}.
A second avenue for future work is a more extensive application of the theory to concrete formalisms, both in breadth (by applying the theory to further formalisms) and in depth (e.g.\ by investigating more syntactic methods to identify conditional independencies, and by evaluating the computational gain experimentally).

\paragraph*{Acknowledgements}
I thank 
Hannes Stra{\ss} and Johannes P.\ Wallner  for interesting discussions on this topic. I thank the reviewers of KR 2024, to which a previous version of this paper was submitted, for their dilligent reviewing and constructive feedback, as well as the reviewers of this conference. 
This work was partially supported by the project LogicLM: Combining Logic Programs with Language Model with file number NGF.1609.241.010 of the research programme NGF AiNed XS Europa 2024-1 which is (partly) financed by the Dutch Research Council (NWO).

\bibliographystyle{kr} 
\bibliography{kr21,proposal}

\end{document}